\newenvironment{thmtag}[2][Theorem]{\begin{trivlist}
\item[\hskip \labelsep {\bfseries #1}\hskip \labelsep {\bfseries #2}]}{\end{trivlist}}
\newtheorem{mydef}{Definition}
\newtheorem{mythm}{Theorem} 
\newtheorem{mylem}[mythm]{Lemma}
\newtheorem{rem}{Remark}  
\providecommand{\abs}[1]{\lvert#1\rvert}
\providecommand{\norm}[1]{\lVert#1\rVert}
\newcommand{\bprob}{\ensuremath{\mathscr{P}^*}}   
\newcommand{\inspace}{\ensuremath{\mathcal{X}}}   
\newcommand{\sset}{\ensuremath{\mathcal{S}}}      
\newcommand{\outspace}{\ensuremath{\mathcal{Y}}}  
\newcommand{\pp}[1]{\ensuremath{\mathbb{#1}}}     
\newcommand{\pspace}{\ensuremath{\mathfrak{P}}}   
\newcommand{\hbspace}{\ensuremath{\mathcal{H}}}   
\newcommand{\hbspacey}{\ensuremath{\mathcal{F}}}  
\newcommand{\abbrvmm}[1]{\ensuremath{\mu_{#1}}}
\newcommand{\rd}[1]{\mathbb{R}^{#1}}              
\newcommand{\rr}{\mathbb{R}} 		          
\newcommand{\ep}{\mathbb{E}}                      
\newcommand{\bopt}{\mathcal{B}}                   
\newcommand{\bmat}{B}                    
\newcommand{\cmat}{C}                    
\newcommand{\bvec}{\mathbf{b}}           
\newcommand{\cvec}{\mathbf{c}}           
\newcommand{\lmat}{L}                    
\newcommand{\qmat}{Q}                    
\newcommand{\kxmat}{K}                   
\newcommand{\kymat}{\lmat}                 
\newcommand{\id}{I}                      
\newcommand{\gmat}{G}                    
\newcommand{\covxx}{\Sigma_{\mathrm{xx}}}
\newcommand{\covxy}{\Sigma_{\mathrm{xy}}}
\newcommand{\covyy}{\Sigma_{\mathrm{yy}}}
\newcommand{\covyx}{\Sigma_{\mathrm{yx}}}
\newcommand{\dd}{\, \mathrm{d}}
\newcommand{\tick}{\ding{51}}
\newcommand{\cross}{\ding{55}}
\def\ci{\perp}
\begin{document}

\title{Domain Generalization via Invariant Feature Representation}

\date{\today}

\author[K. Muandet]{Krikamol Muandet}
\address{Max Planck Institute for Intelligent Systems, Spemannstra\ss e 38, 72076 T\"{u}bingen, Germany}
\email{krikamol@tuebingen.mpg.de}

\author[D. Balduzzi]{David Balduzzi}
\address{Department of Computer Science, ETH Zurich, Universit{\"a}tstrasse 6, 8092 Zurich, Switzerland}
\email{david.balduzzi@inf.ethz.ch}

\author[B. Sch\"{o}lkopf]{Bernhard Sch\"{o}lkopf}
\address{Max Planck Institute for Intelligent Systems, Spemannstra\ss e 38, 72076 T\"{u}bingen, Germany}
\email{bs@tuebingen.mpg.de}

\keywords{domain generalization, domain adaptation, support vector machines, transfer learning, sufficient dimension reduction, central subspace, kernel inverse regression, invariant representation, covariance operator inverse regression}

\begin{abstract}
  This paper investigates domain generalization: How to take knowledge acquired from an arbitrary number of related domains and apply it to previously unseen domains? We propose Domain-Invariant Component Analysis (DICA), a kernel-based optimization algorithm that learns an invariant transformation by minimizing the dissimilarity across domains, whilst preserving the functional relationship between input and output variables. A learning-theoretic analysis shows that reducing dissimilarity improves the expected generalization ability of classifiers on new domains, motivating the proposed algorithm. Experimental results on synthetic and real-world datasets demonstrate that DICA successfully learns invariant features and improves classifier performance in practice.
\end{abstract}

\maketitle
 
\section{Introduction} 
  
Domain generalization considers how to take knowledge acquired from an arbitrary number of related domains, and apply it to previously unseen domains. To illustrate the problem, consider an example taken from \citet{Blanchard:11Generalizing} which studied automatic gating of flow cytometry data. For each of $N$ patients, a set of $n_i$ cells are obtained from peripheral blood samples using a flow cytometer. The cells are then labeled by an expert into different subpopulations, e.g., as a lymphocyte or not. Correctly identifying cell subpopulations is vital for diagnosing the health of patients. However, manual gating is very time consuming. To automate gating, we need to construct a classifier that generalizes well to previously unseen patients, where the distribution of cell types may differ dramatically from the training data. 
 

Unfortunately, we cannot apply standard machine learning techniques directly because the data violates the basic assumption that training data and test data come from the same distribution. Moreover, the training set consists of heterogeneous samples from several distributions, i.e., gated cells from several patients. In this case, the data exhibits covariate (or dataset) shift \citep{widmer:96, quionero:09, bickel:09}: although the marginal distributions $\pp{P}_X$ on cell attributes vary due to biological or technical variations, the functional relationship $\pp{P}(Y|X)$ across different domains is largely stable (cell type is a stable function of a cell's chemical attributes). 

A considerable effort has been made in domain adaptation and transfer learning to remedy this problem, see \citet{Pan09:TKDE,ben-david:10} and references therein. Given a test domain, e.g., a cell population from a new patient, the idea of domain adaptation is to adapt a classifier trained on the training domain, e.g., a cell population from another patient, such that the generalization error on the test domain is minimized. The main drawback of this approach is that one has to repeat this process for every new patient, which can be time-consuming -- especially in medical diagnosis where time is a valuable asset. In this work, across-domain information, which may be more informative than the domain-specific information, is extracted from the training data and used to generalize the classifier to new patients without retraining. 

\begin{figure}[t!]
  \centering
  \includegraphics[width=0.7\linewidth]{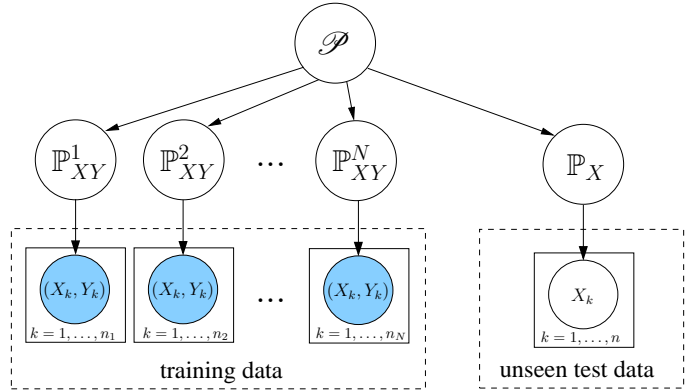}
  \caption{A simplified schematic diagram of the domain generalization framework. A major difference between our framework and most previous work in domain adaptation is that we do not observe the test domains during training time. See text for detailed description on how the data are generated.}
  \label{fig:dg-diagram}
\end{figure}

\subsection{Overview.}
The goal of (supervised) domain generalization is to estimate a functional relationship that handles changes in the marginal $\pp{P}(X)$ or conditional $\pp{P}(Y|X)$ well, see Figure \ref{fig:dg-diagram}. We assume that the conditional probability $\pp{P}(Y|X)$ is stable or varies smoothly with the marginal $\pp{P}(X)$. Even if the conditional is stable, learning algorithms may still suffer from \emph{model misspecification} due to variation in the marginal $\pp{P}(X)$. That is, if the learning algorithm cannot find a solution that perfectly captures the functional relationship between $X$ and $Y$ then its approximate solution will be sensitive to changes in $\pp{P}(X)$. 

In this paper, we introduce Domain Invariant Component Analysis (DICA), a kernel-based algorithm that finds a transformation of the data that {\rm (i)} minimizes the difference between marginal distributions $\pp{P}_X$ of domains as much as possible while {\rm (ii)} preserving the functional relationship $\pp{P}(Y|X)$. 

The novelty of this work is twofold. First, DICA extracts \emph{invariants}: features that transfer across domains. It not only minimizes the divergence between marginal distributions $\pp{P}(X)$, but also preserves the functional relationship encoded in the posterior $\pp{P}(Y|X)$.  The resulting learning algorithm is very simple. Second, while prior work in domain adaptation focused on using data from many different domains to specifically improve the performance on the target task, which is observed during the training time (the classifier is adapted to the specific target task), we assume access to abundant training data and are interested in the generalization ability of the invariant subspace to previously unseen domains (the classifier generalizes to new domains without retraining).

Moreover, we show that DICA generalizes or is closely related to many well-known dimension reduction algorithms including kernel principal component analysis (KPCA) \citep{Scholkopf98:KPCA, Fukumizu04:KDR}, 
transfer component analysis (TCA) \citep{Pan11:TCA}, and covariance operator inverse regression (COIR) \citep{Kim11:COIR}, see \S\ref{sec:relations}. The performance of DICA is analyzed theoretically \S\ref{sec:theory} and demonstrated empirically \S\ref{sec:experiments}. 

\subsection{Related work.}
Domain generalization is a form of transfer learning, which applies expertise acquired in source domains to improve learning of target domains (cf. \citet{Pan09:TKDE} and references therein). Most previous work assumes the availability of the target domain to which the knowledge will be transferred. In contrast, domain generalization focuses on the generalization ability on previously unseen domains. That is, the test data comes from domains that are not available during training. 

Recently, \citet{Blanchard:11Generalizing} proposed an augmented SVM that incorporates empirical marginal distributions into the kernel. A detailed error analysis showed universal consistency of the approach. We apply methods from \citet{Blanchard:11Generalizing} to derive theoretical guarantees on the finite sample performance of DICA.
 
Learning a shared subspace is a common approach in settings where there is distribution mismatch. For example, a typical approach in multitask learning is to uncover a joint (latent) feature/subspace that benefits tasks individually \citep{Argyriou07:MTL, Gu09:LSS, Passos12:FMLTS}. A similar idea has been adopted in domain adaptation, where the learned subspace reduces mismatch between source and target domains \citep{Gretton09:KMM, Pan11:TCA}. Although these approaches have proven successful in various applications, no previous work has fully investigated the generalization ability of a subspace to unseen domains.

\section{Domain-Invariant Component Analysis}
\label{sec:sca}

Let $\inspace$ denote a nonempty input space and $\outspace$ an arbitrary output space. We define a \textbf{domain} to be a joint distribution $\pp{P}_{XY}$ on $\inspace\times\outspace$, and let $\mathfrak{P}_{\inspace\times\outspace}$ denote the set of all domains. Let $\mathfrak{P}_{\inspace}$ and $\mathfrak{P}_{\outspace | \inspace}$ denote the set of probability distributions $\pp{P}_X$ on $X$ and $\pp{P}_{Y|X}$ on $Y$ given $X$ respectively. 

We assume domains are sampled from probability distribution $\mathscr{P}$ on $\mathfrak{P}_{\inspace\times\outspace}$ which has a bounded second moment, i.e., the variance is well-defined. Domains are not observed directly. Instead, we observe $N$ samples $\sset=\{S^i\}_{i=1}^N$, where $S^i=\{(x^{(i)}_k,y^{(i)}_k)\}_{k=1}^{n_i}$ is sampled from $\pp{P}^{i}_{XY}$ and each $\pp{P}^{1}_{XY},\ldots,\pp{P}^{N}_{XY}$ is sampled from $\mathscr{P}$. Since in general $\pp{P}_{XY}^{i}\neq \pp{P}_{XY}^{j}$, the samples in $\sset$ are not i.i.d. Let $\widehat{\pp{P}}^i$ denote empirical distribution associated with each sample $S^i$. For brevity, we use $\pp{P}$ and $\pp{P}_X$ interchangeably to denote the marginal distribution.

Let $\hbspace$ and $\hbspacey$ denote reproducing kernel Hilbert spaces (RKHSes) on $\inspace$ and $\outspace$ with kernels $k:\inspace\times\inspace\rightarrow\rr$ and $l:\outspace\times\outspace\rightarrow\rr$, respectively. Associated with $\hbspace$ and $\hbspacey$ are mappings $x\rightarrow\phi(x)\in\hbspace$ and $y\rightarrow\varphi(y)\in\hbspacey$ induced by the kernels $k(\cdot,\cdot)$ and $l(\cdot,\cdot)$. Without loss of generality, we assume the feature maps of $X$ and $Y$ have zero means, i.e., $\sum_{k=1}^n\phi(x_k)=0=\sum_{k=1}^n\varphi(y_k)$. Let $\covxx$, $\covyy$, $\covxy$, and $\covyx$ be the covariance operators in and between the RKHSes of $X$ and $Y$.

\subsection{Objective.}
Using the samples $\sset$, our goal is to produce an estimate $f:\mathfrak{P}_{\inspace}\times\inspace\rightarrow\rr$ that generalizes well to test samples $S^t=\{x^{(t)}_k\}_{k=1}^{n_t}$ drawn according to some unknown distribution $\pp{P}^{t}\in\mathfrak{P}_{\inspace}$ \citep{Blanchard:11Generalizing}. Since the performance of $f$ depends in part on how dissimilar the test distribution $\pp{P}^{t}$ is from those in the training samples, we propose to preprocess the data to actively reduce the dissimilarity between domains. Intuitively, we want to find transformation $\bopt$ in $\hbspace$ that 
\begin{inparaenum}[(i)] 
\item minimizes the distance between empirical distributions 
of the transformed samples $\bopt(S^i)$ and 
\item preserves the functional relationship between $X$ and $Y$, i.e., $Y \ci X \,|\, \bopt(X)$.
\end{inparaenum}
We formulate an optimization problem capturing these constraints below.

\subsection{Distributional Variance}
\label{sec:dist-variance}
 
First, we define the distributional variance, which measures the dissimilarity across domains. It is convenient to represent distributions as elements in an RKHS \citep{Bertinet04:RKHS, Smola07Hilbert, Sriperumbudur10:Metrics} using the \textbf{mean map}
\begin{equation}
  \label{eq:meanmap}
  \mu : \; \pspace_{\inspace} \rightarrow \hbspace: \; \pp{P} \mapsto \int_\inspace k(x,\cdot)\dd\pp{P}(x) =:\mu_{\pp{P}}\enspace .
\end{equation} 
We assume that $k(x,x)$ is bounded for any $x\in\inspace$ such that $\ep_{x\sim\pp{P}}[k(x,\cdot)] < \infty$. If $k$ is characteristic then \eqref{eq:meanmap} is injective, i.e., all the information about the distribution is preserved \citep{Sriperumbudur10:Metrics}. It also holds that $\ep_{\mathbb{P}}[f]=\langle\abbrvmm{\pp{P}}, f\rangle_{\hbspace}$ for all $f\in\hbspace$ and any $\pp{P}$. 
 
We decompose $\mathscr{P}$ into $\mathscr{P}_X$, which generates the marginal distribution $\pp{P}_X$, and $\mathscr{P}_{Y|X}$, which generates posteriors $\pp{P}_{Y|X}$. The data generating process begins by generating the marginal $\pp{P}_X$ according to $\mathscr{P}_X$. Conditioned on $\pp{P}_X$, it then generate conditional $\pp{P}_{Y|X}$ according to $\mathscr{P}_{Y|X}$. The data point $(x,y)$ is generated according to $\pp{P}_X$ and $\pp{P}_{Y|X}$, respectively. Given set of distributions ${\mathcal P}=\{\pp{P}^1,\pp{P}^2\dotsc,\pp{P}^N\}$ drawn according to $\mathscr{P}_X$, define $N\times N$ Gram matrix $\gmat$ with entries
\begin{eqnarray}
  \label{eq:expected-kernel}
  \gmat_{ij} := \langle\mu_{\pp{P}^i},\mu_{\pp{P}^j}\rangle_{\hbspace} = \iint k(x,z) \dd\pp{P}^i(x) \dd\pp{P}^j(z), 
\end{eqnarray}
\noindent for $i,j=1,\dotsc,N$. Note that $\gmat_{ij}$ is the inner product between kernel mean embeddings of $\pp{P}^i$ and $\pp{P}^j$ in $\hbspace$. Based on \eqref{eq:expected-kernel}, we define the distributional variance, which estimates the variance of the distribution $\mathscr{P}_X$:

\begin{mydef}
  Introduce probability distribution ${\mathcal P}$ on $\hbspace$ with ${\mathcal P}(\mu_{\pp{P}^i})=\frac{1}{N}$ and center $\gmat$ to obtain the covariance operator of ${\mathcal P}$, denoted as $\Sigma :=\gmat - \mathbf{1}_{N}\gmat - \gmat\mathbf{1}_{N}  + \mathbf{1}_{N}\gmat\mathbf{1}_{N}$. The \textbf{distributional variance} is
  \begin{equation}
    \label{eq:tr-variance}  
    {\mathbb V}_{\hbspace}({\mathcal P}) := \frac{1}{N}\mathrm{tr}(\Sigma) 
    = \frac{1}{N}\mathrm{tr}(\gmat) - \dfrac{1}{N^2}\sum_{i,j=1}^{N}\gmat_{ij}.
  \end{equation}  
\end{mydef}
 
The following theorem shows that the distributional variance is suitable as a measure of divergence between domains. 
\begin{mythm}
  \label{thm:uniqueness}
  Let $\bar{\pp{P}}=\frac{1}{N}\sum_{i=1}^{N}\pp{P}^i$. If $k$ is a characteristic kernel, then $\mathbb{V}_{\hbspace}(\mathcal{P})=\frac{1}{N}\sum_{i=1}^{N}\norm{\abbrvmm{\pp{P}^i} - \abbrvmm{\bar{\pp{P}}}}_{\hbspace}^2=0$  if and only if $ \pp{P}^1 = \pp{P}^2 = \cdots=\pp{P}^N$.
\end{mythm}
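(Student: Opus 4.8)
The plan is to split the claim into two pieces: first prove the stated identity converting the trace form of the distributional variance in \eqref{eq:tr-variance} into the averaged squared-deviation form, and then read off the equivalence from nonnegativity of that form together with injectivity of the mean map. The key structural fact I would use throughout is that the mean map $\mu$ is \emph{linear}, since integration against a probability measure is linear; consequently $\abbrvmm{\bar{\pp{P}}}=\frac{1}{N}\sum_{i=1}^{N}\abbrvmm{\pp{P}^i}$, i.e. the embedding of the average domain is the empirical average of the embeddings.

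First I would rewrite the two terms of \eqref{eq:tr-variance} using $\gmat_{ij}=\langle\abbrvmm{\pp{P}^i},\abbrvmm{\pp{P}^j}\rangle_{\hbspace}$. The diagonal term gives $\frac{1}{N}\mathrm{tr}(\gmat)=\frac{1}{N}\sum_{i=1}^{N}\norm{\abbrvmm{\pp{P}^i}}_{\hbspace}^2$, while the full double sum collapses by bilinearity to $\frac{1}{N^2}\sum_{i,j}\gmat_{ij}=\langle\frac{1}{N}\sum_i\abbrvmm{\pp{P}^i},\frac{1}{N}\sum_j\abbrvmm{\pp{P}^j}\rangle_{\hbspace}=\norm{\abbrvmm{\bar{\pp{P}}}}_{\hbspace}^2$, where the last equality is exactly the linearity observation above. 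Next I would expand the target quantity $\frac{1}{N}\sum_i\norm{\abbrvmm{\pp{P}^i}-\abbrvmm{\bar{\pp{P}}}}_{\hbspace}^2$ and substitute $\abbrvmm{\bar{\pp{P}}}=\frac{1}{N}\sum_i\abbrvmm{\pp{P}^i}$ into the cross term; this is the standard bias--variance cancellation and yields $\frac{1}{N}\sum_i\norm{\abbrvmm{\pp{P}^i}}_{\hbspace}^2-\norm{\abbrvmm{\bar{\pp{P}}}}_{\hbspace}^2$, matching $\mathbb{V}_{\hbspace}(\mathcal{P})$ term by term and establishing the identity.

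For the equivalence, I would observe that the expression is an arithmetic mean of nonnegative summands, so it equals zero if and only if every summand vanishes, i.e. $\abbrvmm{\pp{P}^i}=\abbrvmm{\bar{\pp{P}}}$ for all $i$; in particular all the mean embeddings coincide. Invoking that $k$ is characteristic, the mean map \eqref{eq:meanmap} is injective, so coinciding embeddings force $\pp{P}^1=\cdots=\pp{P}^N$. The converse is immediate: if all domains are equal then $\bar{\pp{P}}$ equals their common value, each deviation is $0$, and the variance vanishes.

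The computation here is routine, so the only genuinely substantive ingredient is the injectivity of the mean map furnished by the characteristic assumption, which is precisely what upgrades ``equal embeddings'' to ``equal distributions'' and makes the hypothesis on $k$ essential rather than cosmetic. The main (and minor) obstacle is bookkeeping: correctly identifying $\abbrvmm{\bar{\pp{P}}}$ with the empirical average of the embeddings and checking that the centering in $\Sigma$ reproduces the bias--variance decomposition exactly, with no stray factor of $N$.
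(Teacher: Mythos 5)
Your proposal is correct and follows essentially the same route as the paper: the paper's Lemma on the distributional variance performs the identical bias--variance expansion using linearity of the mean map to identify $\mu_{\bar{\pp{P}}}$ with the average of the embeddings, and the equivalence is then read off from nonnegativity of the summands together with injectivity of the mean embedding for characteristic $k$ (stated in the paper via the MMD being a metric). No gaps.
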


To estimate ${\mathbb V}_{\hbspace}({\mathcal P})$ from $N$ sample sets $\sset=\{S^i\}_{i=1}^N$ drawn from $\pp{P}^1,\ldots,\pp{P}^N$, we define block kernel and coefficient matrices
\begin{equation*}
  \kxmat = \left(
    \begin{array}{cccc}
      K_{1,1} & \cdots & K_{1,N} \\
      \vdots  & \ddots & \vdots \\
      K_{N,1} & \cdots & K_{N,N}
    \end{array}
  \right) \in \mathbb{R}^{n \times n} \enspace , 
  \qmat = \left(
    \begin{array}{cccc}
      \qmat_{1,1} & \cdots & \qmat_{1,N} \\
      \vdots  & \ddots & \vdots \\
      \qmat_{N,1} & \cdots & \qmat_{N,N}
    \end{array}
  \right) \in \mathbb{R}^{n \times n} \enspace ,
\end{equation*}
where $n=\sum_{i=1}^{N}n_i$ and $[K_{i,j}]_{k,l}=k(x^{(i)}_k,x^{(j)}_l)$ is the Gram matrix evaluated between the sample $S^i$ and $S^j$. Following \eqref{eq:tr-variance}, elements of the coefficient matrix $\qmat_{i,j}\in\mathbb{R}^{n_i\times n_j}$ equal $(N-1)/(N^2n_i^2)$ if $i=j$, and $-1/(N^2 n_i n_j)$ otherwise. Hence, the empirical distributional variance is  
\begin{equation}
\widehat{\mathbb{V}}_{\hbspace}(\sset) = \frac{1}{N}\mathrm{tr}(\widehat{\Sigma}) = \mathrm{tr}(\kxmat\qmat) \enspace .
\end{equation}

\begin{mythm}
  \label{thm:vestimator}
  The empirical estimator $\widehat{\mathbb{V}}_{\hbspace}(\sset) = \frac{1}{N}\mathrm{tr}(\widehat{\Sigma}) = \mathrm{tr}(\kxmat\qmat)$ obtained from Gram matrix
  \begin{equation*}
    \widehat{\gmat}_{ij} 
   :=\dfrac{1}{n_i\cdot n_j}\sum_{k=1}^{n_i}\sum_{l=1}^{n_j}
    k(x^{(i)}_k,x^{(j)}_l)
  \end{equation*}
  is a consistent estimator of $\mathbb{V}_{\hbspace}({\mathcal P})$.
\end{mythm}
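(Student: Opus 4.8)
The plan is to reduce the statement to the consistency of the empirical kernel mean map and then push that convergence through the finitely many fixed algebraic operations that define $\widehat{\mathbb{V}}_{\hbspace}$. Throughout I would hold $N$ and the realized set of domains $\mathcal{P}=\{\pp{P}^1,\dots,\pp{P}^N\}$ fixed and let the per-domain sample sizes $n_1,\dots,n_N$ grow, since $\mathbb{V}_{\hbspace}(\mathcal{P})$ is the population quantity attached to this particular draw of domains; the only remaining randomness is the i.i.d.\ within-domain sampling of $S^i$ from $\pp{P}^i$. The first observation is that the proposed Gram entry is exactly the inner product of \emph{empirical} mean maps: writing $\abbrvmm{\widehat{\pp{P}}^i}=\frac{1}{n_i}\sum_{k=1}^{n_i}k(x^{(i)}_k,\cdot)$, the reproducing property gives $\widehat{\gmat}_{ij}=\langle\abbrvmm{\widehat{\pp{P}}^i},\abbrvmm{\widehat{\pp{P}}^j}\rangle_{\hbspace}$, whereas the target entry is $\gmat_{ij}=\langle\abbrvmm{\pp{P}^i},\abbrvmm{\pp{P}^j}\rangle_{\hbspace}$. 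It therefore suffices to show $\widehat{\gmat}_{ij}\to\gmat_{ij}$ in probability for each fixed pair $(i,j)$.

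The core estimate is the consistency of $\abbrvmm{\widehat{\pp{P}}^i}$. Because it is an average of i.i.d.\ Hilbert-space-valued terms $k(x^{(i)}_k,\cdot)$ with mean $\abbrvmm{\pp{P}^i}$, I would bound the expected squared error directly. Using $\norm{k(x,\cdot)}_{\hbspace}^2=k(x,x)$ and independence,
\[
  \ep\,\norm{\abbrvmm{\widehat{\pp{P}}^i}-\abbrvmm{\pp{P}^i}}_{\hbspace}^2
  = \frac{1}{n_i}\Bigl(\ep_{x\sim\pp{P}^i}k(x,x)-\norm{\abbrvmm{\pp{P}^i}}_{\hbspace}^2\Bigr)
  = O(n_i^{-1}),
\]
where finiteness follows from the standing assumption that $k(x,x)$ is bounded. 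Markov's inequality then yields $\norm{\abbrvmm{\widehat{\pp{P}}^i}-\abbrvmm{\pp{P}^i}}_{\hbspace}=O_p(n_i^{-1/2})$, the standard concentration of mean embeddings.

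To propagate this to the inner products, a Cauchy--Schwarz expansion gives
\[
  \lvert\widehat{\gmat}_{ij}-\gmat_{ij}\rvert
  \le \norm{\abbrvmm{\widehat{\pp{P}}^i}}_{\hbspace}\,\norm{\abbrvmm{\widehat{\pp{P}}^j}-\abbrvmm{\pp{P}^j}}_{\hbspace}
  + \norm{\abbrvmm{\pp{P}^j}}_{\hbspace}\,\norm{\abbrvmm{\widehat{\pp{P}}^i}-\abbrvmm{\pp{P}^i}}_{\hbspace},
\]
and since boundedness of $k$ makes every norm factor uniformly bounded, the previous step gives $\widehat{\gmat}_{ij}-\gmat_{ij}=O_p(n_i^{-1/2}+n_j^{-1/2})\to0$. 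Finally, for fixed $N$ the estimator $\widehat{\mathbb{V}}_{\hbspace}(\sset)=\frac{1}{N}\sum_{i}\widehat{\gmat}_{ii}-\frac{1}{N^2}\sum_{i,j}\widehat{\gmat}_{ij}$ is a fixed affine---hence continuous---function of the $N^2$ entries $\widehat{\gmat}_{ij}$, with the identical map applied to $\gmat$ returning $\mathbb{V}_{\hbspace}(\mathcal{P})$. Entrywise convergence in probability of finitely many terms then yields $\widehat{\mathbb{V}}_{\hbspace}(\sset)\to\mathbb{V}_{\hbspace}(\mathcal{P})$ by the continuous mapping theorem, which is the claimed consistency.

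The main obstacle, and the only place genuine probabilistic content enters, is the Hilbert-space variance computation giving the clean $O_p(n^{-1/2})$ rate; everything else is linear algebra and boundedness bookkeeping. I would also flag one modeling point worth stating explicitly: consistency here is with respect to the within-domain sampling for a \emph{fixed} set of sampled domains. If one instead wanted $\mathbb{V}_{\hbspace}(\mathcal{P})$ to converge to a population-over-$\mathscr{P}_X$ variance as $N\to\infty$, that is a separate limit relying on the bounded-second-moment assumption on $\mathscr{P}$, and it should not be conflated with the present statement.
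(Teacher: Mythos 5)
Your proof is correct and follows essentially the same route as the paper's: identify $\widehat{\gmat}_{ij}$ as the inner product of empirical mean embeddings, invoke consistency of $\abbrvmm{\widehat{\pp{P}}^i}\to\abbrvmm{\pp{P}^i}$, and push the convergence through the finitely many algebraic operations defining $\widehat{\mathbb{V}}_{\hbspace}$. The only difference is that you prove the $O_p(n_i^{-1/2})$ concentration of the mean map and the Cauchy--Schwarz propagation explicitly, whereas the paper simply cites Theorem 15 of \citet{Altun06:unifyingdivergence} and asserts $\widehat{\gmat}\to\gmat$; your version is more self-contained but mathematically equivalent.
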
 

\subsection{Formulation of DICA}
DICA finds an orthogonal transform $\bopt$ onto a low-dimensional subspace ($m \ll n$) that minimizes the distributional variance ${\mathbb V}_{\hbspace}(\sset)$ between samples from $\sset$, i.e. the \emph{dissimilarity across domains}. Simultaneously, we require that $\bopt$ preserves the functional relationship between $X$ and $Y$, i.e. $Y \ci X \,|\, \bopt(X)$.

\subsubsection{Minimizing distributional variance.}
In order to simplify notation, we ``flatten'' $\{(x^{(i)}_k,y^{(i)}_k)_{k=1}^{n_i}\}_{i=1}^{N}$ to $\{(x_k,y_k)\}_{k=1}^n$ where $n=\sum_{i=1}^{N}n_i$. Let $\bvec_k = \sum_{i=1}^{n}\beta^i_k\phi(x_i) = \Phi_x\bm{\beta}_k$ be the $k^{th}$ basis function of $\bopt$ where $\Phi_x=[\phi(x_1),\phi(x_2),\dotsc,\phi(x_n)]$ and $\bm{\beta}_k$ are $n$-dimensional coefficient vectors. Let $B=[\bm{\beta}_1,\bm{\beta}_2,\ldots,\bm{\beta}_m]$ and $\tilde{\Phi}_x$ denote the projection of $\Phi_x$ onto $\bvec_k$, i.e., $\tilde{\Phi}_x=\bvec_k^{\top}\Phi_x=\bm{\beta}_k^{\top}\Phi_x^{\top}\Phi_x = \bm{\beta}_k^{\top}\kxmat$. The kernel on the $\bopt$-projection of $X$ is
\begin{equation}
	\label{e:bkernel}
  \widetilde{K} := \tilde{\Phi}_x^{\top}\tilde{\Phi}_x = \kxmat BB^{\top}\kxmat \enspace.
\end{equation} 
After applying transformation $\bopt$, the empirical distributional variance between sample distributions is
\begin{equation}
  \widehat{\mathbb{V}}_{\hbspace}(\bopt\sset) = \mathrm{tr}(\widetilde{K} \qmat)
  =\mathrm{tr}(B^{\top}\kxmat\qmat\kxmat B) \enspace .   \label{eq:dvariance}
\end{equation}

\subsubsection{Preserving the functional relationship.}
The central subspace $C$ is the minimal subspace that captures the functional relationship between $X$ and $Y$, i.e. $Y\ci X|\cmat^\top X$. Note that in this work we generalize a linear transformation $\cmat^\top X$ to nonlinear one $\bopt(X)$. To find the central subspace we use the inverse regression framework, \citep{Li91:Sliced}: 

\begin{mythm}
  \label{thm:li}
     If there exists a \textbf{central subspace} $\cmat=[ \cvec_1,\ldots, \cvec_m]$ satisfying $Y\ci X|\cmat^\top X$, and for any $a\in\rd{d}$, $\ep[a^\top X|\cmat^\top X]$ is linear in $\{\cvec^\top_i X\}_{i=1}^m$, then $\ep[X|Y]\subset\text{span}\{\covxx\cvec_i\}_{i=1}^m$.   
\end{mythm}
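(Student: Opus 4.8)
The plan is to follow the classical inverse-regression argument of \citet{Li91:Sliced}, transcribed into the present notation. Throughout I assume $X$ is centered, $\ep[X]=0$, which matches the zero-mean feature-map convention already in force; then $\covxx=\ep[XX^\top]$ and the claim reads: the inverse regression curve $\ep[X\mid Y=y]$ lies in $\mathrm{span}\{\covxx\cvec_i\}_{i=1}^m$ for every value of $y$.

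First I would use the linearity hypothesis to pin down $\ep[X\mid \cmat^\top X]$ exactly. Applying the assumption coordinate-by-coordinate (let $a$ range over a basis of $\rd{d}$) shows that $\ep[X\mid \cmat^\top X]$ is a linear function of $\cmat^\top X$. Since under this linearity condition the conditional expectation coincides with the population least-squares projection of $X$ onto $\cmat^\top X$, I would identify it explicitly as
\[
  \ep[X\mid \cmat^\top X] \;=\; \covxx\cmat\,(\cmat^\top\covxx\cmat)^{-1}\,\cmat^\top X,
\]
using $\mathrm{Cov}(X,\cmat^\top X)=\covxx\cmat$ and $\mathrm{Cov}(\cmat^\top X)=\cmat^\top\covxx\cmat$. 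The column space of the prefactor $\covxx\cmat(\cmat^\top\covxx\cmat)^{-1}\cmat^\top$ is precisely $\mathrm{span}\{\covxx\cvec_i\}_{i=1}^m$, because $(\cmat^\top\covxx\cmat)^{-1}$ is nonsingular.

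Next I would bring in the conditional independence $Y\ci X\mid\cmat^\top X$. This gives $\ep[X\mid\cmat^\top X,\,Y]=\ep[X\mid\cmat^\top X]$: once $\cmat^\top X$ is fixed, $X$ carries no further dependence on $Y$. Combining this with the tower property,
\[
  \ep[X\mid Y] \;=\; \ep\big[\ep[X\mid \cmat^\top X,\,Y]\mid Y\big] \;=\; \ep\big[\ep[X\mid \cmat^\top X]\mid Y\big].
\]
Substituting the linear form from the previous step and pulling the constant matrix outside the inner expectation yields
\[
  \ep[X\mid Y] \;=\; \covxx\cmat\,(\cmat^\top\covxx\cmat)^{-1}\,\cmat^\top\,\ep[X\mid Y].
\]
For every $y$ the right-hand side is a vector in the column space of $\covxx\cmat$, i.e. in $\mathrm{span}\{\covxx\cvec_i\}_{i=1}^m$, which is exactly the conclusion.

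I expect the only delicate point to be the first step: the hypothesis merely asserts linearity of $\ep[a^\top X\mid\cmat^\top X]$, and I must argue carefully that this condition is precisely what upgrades the best \emph{linear} predictor to the genuine conditional expectation, so that the explicit formula above is an identity and not an approximation. The remaining ingredients---the tower property and the conditional-independence simplification---are routine, and the final containment follows immediately from the fixed column space of $\covxx\cmat$.
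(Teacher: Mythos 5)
Your argument is correct, but note that the paper itself offers no proof of this statement to compare against: Theorem~\ref{thm:li} is imported verbatim from \citet{Li91:Sliced} (the inverse-regression framework), and neither the main text nor the supplementary material proves it. What you have written is the classical Li argument, and it is sound. The one point you flag as delicate is handled properly: applying the linearity hypothesis to a basis of vectors $a$ shows $\ep[X\mid\cmat^\top X]=M\,\cmat^\top X$ for some fixed matrix $M$ (the intercept vanishes by centering), and then either your projection argument or the direct moment computation $\ep[X(\cmat^\top X)^\top]=M\,\ep[(\cmat^\top X)(\cmat^\top X)^\top]$ pins down $M=\covxx\cmat(\cmat^\top\covxx\cmat)^{-1}$, after which conditional independence and the tower property give $\ep[X\mid Y]=\covxx\cmat(\cmat^\top\covxx\cmat)^{-1}\cmat^\top\ep[X\mid Y]\in\mathrm{span}\{\covxx\cvec_i\}_{i=1}^m$. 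The only hypotheses you use beyond the literal statement are $\ep[X]=0$, which is consistent with the paper's centering convention, and invertibility of $\cmat^\top\covxx\cmat$, which is implicit in $\cmat$ being a minimal (hence full column rank) central subspace with nondegenerate covariance; it would be worth stating both explicitly.
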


It follows that the bases $\cmat$ of the central subspace coincide with the $m$ largest eigenvectors of $\mathbb{V}(\ep[X|Y])$ premultiplied by $\covxx^{-1}$. Thus, the basis $\cvec$ is the solution to the eigenvalue problem $\mathbb{V}(\ep[X|Y])\covxx\cvec = \gamma\covxx\cvec$. Alternatively, for each $\cvec_k$ one may solve
\begin{align*}
  \underset{\cvec_k\in\rr^d}{\text{max}} \; 
\frac{\cvec_k^{\top}\covxx^{-1}\mathbb{V}(\ep[X|Y])\covxx\cvec_k}{\cvec_k^{\top}\cvec_k}
\end{align*}
under the condition that $\cvec_k$ is chosen to not be in the span of the previously chosen $\cvec_k$. In our case, $x$ is mapped to $\phi(x)\in\hbspace$ induced by the kernel $k$ and $\bopt$ has nonlinear basis functions $\cvec_k\in\hbspace,\, k=1,\ldots,m$. This nonlinear extension implies that $\ep[X|Y]$ lies on a function space spanned by $\{\covxx\cvec_k\}_{k=1}^m$, which coincide with the eigenfunctions of the operator $\mathbb{V}(\ep[X|Y])$ \citep{Wu2008:KSIR,Kim11:COIR}. Since we always work in $\hbspace$, we drop $\phi$ from the notation below.

To avoid slicing the output space explicitly \citep{Li91:Sliced,Wu2008:KSIR}, we exploit its kernel structure when estimating the covariance of the inverse regressor. The following result from \citet{Kim11:COIR} states that, under a mild assumption, $\mathbb{V}(\ep[X|Y])$ can be expressed in terms of covariance operators:
\begin{mythm}
  \label{thm:cov-optr}
  If for all $f\in\hbspace$, there exists $g\in\hbspacey$ such that $\ep[f(X)|y]=g(y)$ for almost every $y$, then 
  \begin{equation}
    \label{eq:covariance-ir}
    \mathbb{V}(\ep[X|Y]) = \covxy\covyy^{-1}\covyx \enspace .
  \end{equation}
\end{mythm}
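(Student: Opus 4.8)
The plan is to read $\ep[X|Y]$ as the conditional mean embedding $y \mapsto \ep[\phi(X)\,|\,Y=y]\in\hbspace$, a random element of $\hbspace$ driven by $Y$, and to read $\mathbb{V}(\ep[X|Y])$ as its operator variance $\ep_Y\big[\ep[\phi(X)|Y]\otimes\ep[\phi(X)|Y]\big]$. Since we assume zero-mean feature maps, the mean of this random element is $\ep_Y[\ep[\phi(X)|Y]]=\ep[\phi(X)]=0$, so the variance coincides with the second moment and all covariance operators reduce to plain second moments. With this reading the goal becomes a purely operator-algebraic identity, and the whole argument hinges on turning the stated hypothesis into an exact expression for the conditional mean embedding.

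First I would establish the conditional mean identity $\covyx f = \covyy\,\ep[f(X)|Y=\cdot]$ for every $f\in\hbspace$. For arbitrary $g\in\hbspacey$ one computes, using zero-mean feature maps and the reproducing property, that $\langle g,\covyx f\rangle_{\hbspacey}=\ep[g(Y)f(X)]=\ep[g(Y)\,\ep[f(X)|Y]]$. The stated hypothesis is exactly what guarantees that $y\mapsto\ep[f(X)|Y=y]$ is an element of $\hbspacey$, so this last expression equals $\langle g,\covyy\,\ep[f(X)|Y=\cdot]\rangle_{\hbspacey}$ by the defining property of $\covyy$. Since $g$ is arbitrary the identity follows, and inverting $\covyy$ gives $\ep[f(X)|Y=\cdot]=\covyy^{-1}\covyx f$ in $\hbspacey$.

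Next I would dualize. Evaluating the previous identity at a point $y$ via the reproducing property, $\ep[f(X)|Y=y]=\langle\covyy^{-1}\covyx f,\varphi(y)\rangle_{\hbspacey}=\langle f,\covxy\covyy^{-1}\varphi(y)\rangle_{\hbspace}$, using $(\covyx)^{\ast}=\covxy$ and self-adjointness of $\covyy$. As this holds for all $f$, the $\hbspace$-valued conditional mean embedding is $\ep[\phi(X)|Y=y]=\covxy\covyy^{-1}\varphi(y)$. Substituting into the variance and pulling the fixed operator $A:=\covxy\covyy^{-1}$ outside the expectation via $(Au)\otimes(Au)=A\,(u\otimes u)\,A^{\ast}$ yields $\mathbb{V}(\ep[X|Y])=A\,\ep_Y[\varphi(Y)\otimes\varphi(Y)]\,A^{\ast}=A\,\covyy\,A^{\ast}$, which telescopes to $\covxy\covyy^{-1}\covyy\covyy^{-1}\covyx=\covxy\covyy^{-1}\covyx$, as claimed.

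The main obstacle is the second step, the conditional mean embedding identity, since this is the only place the hypothesis enters and it does essential work: it ensures $\ep[f(X)|Y=\cdot]$ actually lives in $\hbspacey$, so that $\covyx f$ lands in the range of $\covyy$ and applying $\covyy^{-1}$ is legitimate, rather than $\ep[f(X)|Y=\cdot]$ being merely a measurable function. I would also flag that writing $\covyy^{-1}$ presupposes invertibility of $\covyy$; rigorously one should either assume this or interpret $\covyy^{-1}$ as acting on the appropriate range, and separately check measurability and integrability so that the Bochner expectation defining $\mathbb{V}$ and the interchange of the bounded operator $A$ with $\ep_Y$ are both justified.
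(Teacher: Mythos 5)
Your proof is correct, but it takes a genuinely different route from the paper's. The paper treats \eqref{eq:covariance-ir} as a corollary of two imported facts: the law of total variance $\mathbb{V}(\phi(X)) = \ep[\mathbb{V}(\phi(X)|Y)] + \mathbb{V}(\ep[\phi(X)|Y])$, together with the theorem of Fukumizu et al.\ that under the same hypothesis the conditional covariance operator satisfies $\Sigma_{\mathrm{xx}|\mathrm{y}} := \covxx - \covxy\covyy^{-1}\covyx = \ep[\mathbb{V}(\phi(X)|Y)]$. Subtracting then gives $\mathbb{V}(\ep[\phi(X)|Y]) = \covxx - \Sigma_{\mathrm{xx}|\mathrm{y}} = \covxy\covyy^{-1}\covyx$ in two lines, at the cost of leaning on the cited conditional-covariance identity as a black box. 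You instead prove the conditional mean embedding identity $\covyx f = \covyy\,\ep[f(X)|Y=\cdot]$ from scratch and compute the second moment of $\ep[\phi(X)|Y]=\covxy\covyy^{-1}\varphi(Y)$ directly, which is self-contained, produces the explicit embedding formula as a useful byproduct, and makes transparent exactly where the hypothesis does its work (guaranteeing $\ep[f(X)|Y=\cdot]\in\hbspacey$ so that $\covyx f$ lies in the range of $\covyy$). The technical caveats you flag --- invertibility of $\covyy$, or interpreting $\covyy^{-1}$ on the appropriate range, and justifying the Bochner integral --- are real and apply equally to the paper's version; if you want to avoid the formal expression $\covyy^{-1}\varphi(y)$ entirely, note that your final computation goes through in weak form, since $\langle f,\mathbb{V}(\ep[X|Y])f'\rangle = \ep_Y\bigl[(\covyy^{-1}\covyx f)(Y)\,(\covyy^{-1}\covyx f')(Y)\bigr] = \langle f,\covxy\covyy^{-1}\covyx f'\rangle$ uses only $\covyy^{-1}$ applied to elements of the range of $\covyx$.
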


Let $\Phi_y=[\varphi(y_1),\ldots,\varphi(y_n)]$ and $\lmat=\Phi_y^{\top}\Phi_y$. The covariance of inverse regressor \eqref{eq:covariance-ir} is estimated from the samples $\sset$ as $\widehat{\mathbb{V}}(\ep[X|Y]) = \widehat{\Sigma}_{\mathrm{xy}}\widehat{\Sigma}_{\mathrm{yy}}^{-1}\widehat{\Sigma}_{\mathrm{yx}}=\frac{1}{n}\Phi_x\lmat(\lmat + n\varepsilon\id_n)^{-1}\Phi_x^{\top}$ where $\widehat{\Sigma}_{\mathrm{xy}} = \frac{1}{n}\Phi_x\Phi_y^{\top}$ and $\widehat{\Sigma}_{\mathrm{yy}} = \frac{1}{n}\Phi_y\Phi_y^{\top}$. Assuming inverses $\widehat{\Sigma}_{\mathrm{yy}}^{-1}$ and $\widehat{\Sigma}_{\mathrm{xx}}^{-1}$ exist, a straightforward computation (see Supplementary) shows
\begin{eqnarray} 	\bvec_k^{\top}\widehat{\Sigma}_{\mathrm{xx}}^{-1}\widehat{\mathbb{V}}(\ep[X|Y])\widehat{\Sigma}_{\mathrm{xx}}\bvec_k &=& \frac{1}{n}\bm{\beta}_k^{\top}\lmat(\lmat+n\varepsilon\id)^{-1}\kxmat^2\bm{\beta}_k \nonumber \\
	\bvec_k^{\top}\bvec_k &=& \bm{\beta}_k^{\top}\kxmat\bm{\beta}_k,
	\label{eq:flda}
\end{eqnarray}
\noindent where $\varepsilon$ smoothes the affinity structure of the output space $Y$, thus acting as a kernel regularizer. Since we are interested in the projection of $\phi(x)$ onto the basis functions $\bvec_k$, we formulate the optimization in terms of $\bm{\beta}_k$. For a new test sample $x_t$, the projection onto basis function $\bvec_k$ is $\mathbf{k}_t\bm{\beta}_k$, where $\mathbf{k}_t=[k(x_1,x_t),\ldots,k(x_n,x_t)]$. 

\subsubsection{The optimization problem.}
Combining \eqref{eq:dvariance} and \eqref{eq:flda}, DICA finds $B=[\bm{\beta}_1,\bm{\beta}_2,\ldots,\bm{\beta}_m]$ that solves
\begin{equation}
	\label{eq:ssca}
    \underset{\bmat\in\rr^{n\times m}}{\text{max}}
	\frac{\frac{1}{n}\mathrm{tr}\left(\bmat^{\top}\lmat(\lmat + n\varepsilon\id_n)^{-1}\kxmat^2\bmat\right)}
	{\mathrm{tr}\left(\bmat^{\top}\kxmat\qmat\kxmat\bmat + \bmat\kxmat\bmat\right)}
\end{equation}

The numerator requires that $\bmat$ aligns with the bases of the central subspace. The denominator forces both dissimilarity across domains and the complexity of $\bmat$ to be small, thereby tightening generalization bounds, see \S\ref{sec:theory}. Rewriting \eqref{eq:ssca} as a constrained optimization (see Supplementary) yields Lagrangian
\begin{align}
  \mathcal{L} =& \frac{1}{n}\mathrm{tr}\left(\bmat^{\top}\lmat(\lmat + n\varepsilon\id_n)^{-1}\kxmat^2\bmat\right) - \mathrm{tr}\left(\left(\bmat^{\top}\kxmat\qmat\kxmat\bmat + \bmat\kxmat\bmat - \id_m\right)\Gamma\right) \label{eq:ssca-lagragian} \enspace ,
\end{align}
\noindent where $\Gamma$ is a diagonal matrix containing the Lagrange multipliers. Setting the derivative of \eqref{eq:ssca-lagragian} w.r.t. $B$ to zero yields the generalized eigenvalue problem:
\begin{equation}
  \frac{1}{n}\lmat(\lmat + n\varepsilon\id_n)^{-1}\kxmat^2 B
  = (\kxmat\qmat\kxmat + \kxmat)B \Gamma \enspace .
  \label{eq:bsolution} 
\end{equation}
Transformation $B$ corresponds to the $m$ leading eigenvectors of the generalized eigenvalue problem \eqref{eq:bsolution}\footnote{In practice, it is more numerically stable to solve the generalized eigenvalue problem $\frac{1}{n}\lmat(\lmat + n\varepsilon\id_n)^{-1}\kxmat^2 B = (\kxmat\qmat\kxmat + \kxmat + \lambda\id)B \Gamma$, where $\lambda$ is a small constant.}.

The inverse regression framework based on covariance operators has two benefits. First, it avoids explicitly slicing the output space, which makes it suitable for high-dimensional output. Second, it allows for structured outputs on which explicit slicing may be impossible, e.g., trees and sequences. Since our framework is based entirely on kernels, it is applicable to any type of input and output variables, as long as the corresponding kernels can be defined.

\subsection{Unsupervised DICA}
\label{sec:usca} 

In some application domains, such as image denoising, information about the target may not be available. We therefore derive an unsupervised version of DICA. Instead of preserving the central subspace, unsupervised DICA (UDICA) maximizes the variance of $X$ in the feature space, which is estimated as $\frac{1}{n}\mathrm{tr}(B^{\top}\kxmat^2B)$. Thus, UDICA solves
\begin{equation}
  \label{eq:udica}
  \underset{B\in\rr^{n\times m}}{\text{max}}
  \frac{\frac{1}{n}\mathrm{tr}(B^{\top}\kxmat^2 B)}
  {\mathrm{tr}(B^{\top}\kxmat\qmat\kxmat B + B^{\top}\kxmat B)}
 \enspace . 
\end{equation}
Similar to DICA, the solution of \eqref{eq:udica} is obtained by solving the generalized eigenvalue problem 
\begin{equation}
  \frac{1}{n}\kxmat^2B = (\kxmat\qmat\kxmat + \kxmat)B\Gamma \enspace . 
\end{equation}
UDICA is a special case of DICA where $\lmat = \frac{1}{n}\id$ and $\varepsilon\rightarrow 0$. Algorithm \ref{alg:dica} summarizes supervised and unsupervised domain-invariant component analysis.

\begin{algorithm}[t!]
  \caption{Domain-Invariant Component Analysis}
  \begin{tabular}{ll}
  \textbf{Input:} & Parameters $\lambda$, $\varepsilon$, and $m \ll n$. Sample $\sset=\{S^i=\{(x_k^{(i)},y_k^{(i)})\}_{k=1}^{n_i}\}_{i=1}^N$. \\
  \textbf{Output:} & Projection $\bmat_{n\times m}$ and kernel $\widetilde{K}_{n\times n}$.
  \end{tabular}
  \begin{algorithmic}[1]
    \STATE Calculate gram matrix $[\kxmat_{ij}]_{kl}=k(x_k^{(i)},x_l^{(j)})$ and $[\kymat_{ij}]_{kl}=l(y_k^{(i)},y_l^{(j)})$.
    \STATE \textbf{Supervised:} $C=\kymat(\kymat + n\varepsilon\id)^{-1}\kxmat^2$.
    \STATE \textbf{Unsupervised:} $C=\kxmat^2$.
    \STATE Solve $\frac{1}{n}CB = (\kxmat\qmat\kxmat + \kxmat+\lambda\id)B\Gamma$ for $B$.
    \STATE Output $\bmat$ and  $\widetilde{K} \leftarrow \kxmat\bmat\bmat^{\top}\kxmat$. 
    \STATE The test kernel $\widetilde{K}^t \leftarrow \kxmat^t\bmat\bmat^{\top}\kxmat$ where $\kxmat^t_{n_t\times n}$ is the joint kernel between test and training data.
  \end{algorithmic}
  \label{alg:dica}
\end{algorithm}

\subsection{Relations to Other Methods}
\label{sec:relations}

The DICA and UDICA algorithms generalize many well-known dimension reduction techniques. In the supervised setting, if dataset $\sset$ contains samples drawn from a single distribution $\pp{P}_{XY}$ then we have $\kxmat\qmat\kxmat=\mathbf{0}$. Substituting $\alpha:= KB$ gives the eigenvalue problem $\frac{1}{n}\kymat(\kymat + n\varepsilon\id)^{-1}\kxmat \alpha = \kxmat \alpha \Gamma$, which corresponds to covariance operator inverse regression (COIR) \citep{Kim11:COIR}.

If there is only a single distribution then unsupervised DICA reduces to KPCA since  $\kxmat\qmat\kxmat = \mathbf{0}$ and finding $\bmat$ requires solving the eigensystem $\kxmat\bmat = \bmat\Gamma$ which recovers   KPCA \citep{Scholkopf98:KPCA}. If there are two domains, source $\pp{P}_S$ and target $\pp{P}_T$, then UDICA is closely related -- though not identical to -- Transfer Component Analysis \citep{Pan11:TCA}. This follows from the observation that $\mathbb{V}_{\hbspace}(\{\pp{P}_S,\pp{P}_T\}) = \|\mu_{\pp{P}_S}-\mu_{\pp{P}_T}\|^2$, see proof of Theorem~\ref{thm:uniqueness}.

\subsection{A Learning-Theoretic Bound}
\label{sec:theory}

We bound the generalization error of a classifier trained after DICA-preprocessing. The main complication is that samples are not identically distributed. We adapt an approach to this problem developed in  \citet{Blanchard:11Generalizing} to prove a generalization bound that applies after transforming the empirical sample using $\bopt$. Recall that $\bopt = \Phi_x\bmat$.

Define kernel $\bar{k}$ on $\pspace\times \inspace$ as $\bar{k}((\pp{P},x),(\pp{P}',x')):=k_\pspace(\pp{P},\pp{P}')\cdot k_\inspace(x,x')$. Here, $k_\inspace$ is the kernel on $\hbspace_\inspace$ and the kernel on distributions is $k_\pspace(\pp{P},\pp{P}') := \kappa(\mu_{\pp{P}},\mu_{\pp{P}'})$ where $\kappa$ is a positive definite kernel \citep{Christmann10:Universal, Muandet12:SMM}. Let $\Psi_\pspace$ denote the corresponding feature map.

\begin{mythm}\label{t:lbound}
	Under reasonable technical assumptions, see Supplementary, it holds with probability at least $1-\delta$ that,
	\begin{gather*}
		\sup_{\|f\|_{\hbspace}\leq 1} 
	\left|\ep_\bprob \ep_\pp{P}\ell(f(\tilde{X}_{ij}\bopt),Y_i) 
	-\ep_{\hat{\pp{P}}} \ell(f(\tilde{X}_{ij}\bopt),Y_i)\right|^2\\
	  	 \leq  c_1 \frac{1}{N} \mathrm{tr}(\bmat^\intercal \kxmat\qmat\kxmat\bmat) 
		+ \mathrm{tr}(\bmat^\top\kxmat\bmat)\left(c_2\frac{N(\log\frac{1}{\delta}+2\log N)}{n}
		+ \frac{c_3\log\frac{1}{\delta}+c_4}{N}\right).
	\end{gather*}
\end{mythm}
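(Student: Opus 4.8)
The plan is to follow the hierarchical-sampling template of \citet{Blanchard:11Generalizing}, adapted to the DICA-transformed representation $\bopt = \Phi_x\bmat$. The starting point is a two-level decomposition of the generalization gap that isolates the two independent sources of randomness: the draw of $N$ domains from $\bprob$, and the draw of $n_i$ points within each domain from $\pp{P}^i$. For fixed $f$, writing $\ell$ for the transformed loss, I would split
\[
\ep_\bprob\ep_\pp{P}\ell - \ep_{\widehat{\pp{P}}}\ell
= \Big(\ep_\bprob\ep_\pp{P}\ell - \tfrac{1}{N}\textstyle\sum_{i}\ep_{\pp{P}^i}\ell\Big)
+ \Big(\tfrac{1}{N}\textstyle\sum_{i}\ep_{\pp{P}^i}\ell - \ep_{\widehat{\pp{P}}}\ell\Big),
\]
where the first bracket is the \emph{between-domain} error (finite $N$) and the second is the \emph{within-domain} error (finite $n_i$). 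After bounding each piece with high probability, I combine them through $(a+b)^2\le 2a^2+2b^2$ and a union bound over the two events, which is where the $\log\frac1\delta$ factors enter.

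To handle the supremum over the unit ball $\{f:\|f\|_{\hbspace}\le 1\}$, I would invoke the reproducing property to rewrite each linear functional $f\mapsto \ep_\pp{P}\ell(f(\,\cdot\,\bopt),Y)$ as an inner product against an element of $\hbspace$ (using a contraction/Talagrand step for the loss, assuming $\ell$ is Lipschitz in its first argument, as in the technical assumptions). The supremum then collapses to an RKHS norm of a difference of mean elements, so controlling the between-domain bracket reduces to controlling the fluctuation of the transformed domain mean embeddings. This is the crucial link to the distributional variance: the across-domain variance of the per-domain functional, taken over the unit ball, is exactly the trace of the covariance operator of the transformed embeddings, namely $\widehat{\mathbb{V}}_{\hbspace}(\bopt\sset)=\mathrm{tr}(\bmat^{\top}\kxmat\qmat\kxmat\bmat)$ from \eqref{eq:dvariance} (cf. Theorem~\ref{thm:uniqueness}). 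Since the $N$ domains are independent, the average $\tfrac{1}{N}\sum_i\ep_{\pp{P}^i}\ell$ fluctuates around $\ep_\bprob\ep_\pp{P}\ell$ with variance of order $\tfrac1N$ times this quantity, yielding the leading term $c_1\tfrac1N\mathrm{tr}(\bmat^{\top}\kxmat\qmat\kxmat\bmat)$ after a variance-based (Bernstein/Chebyshev) concentration inequality.

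For the within-domain bracket I would bound, uniformly over the unit ball, the deviation of each empirical expectation $\ep_{\widehat{\pp{P}}^i}\ell$ from its idealized counterpart $\ep_{\pp{P}^i}\ell$ by a Rademacher-complexity argument applied inside each of the $N$ domains. The complexity of the transformed hypothesis class is governed by the norm of the representation, and bounding $\sup_{\|f\|\le1}$ of the within-domain empirical process produces the factor $\mathrm{tr}(\bmat^{\top}\kxmat\bmat)$ — the same representation-norm term appearing in the denominator of the DICA objective \eqref{eq:ssca}. A union bound over the $N$ domains supplies the $2\log N$ term and, when the per-domain sizes are balanced ($n_i\approx n/N$, so $\tfrac1N\sum_i\tfrac1{n_i}\approx N/n$), converts the within-domain rates into the aggregate factor $N/n$, giving the $c_2\,\mathrm{tr}(\bmat^{\top}\kxmat\bmat)\,N(\log\frac1\delta+2\log N)/n$ contribution; the residual concentration of the $N$ per-domain deviations around their mean then yields the remaining $(c_3\log\frac1\delta+c_4)/N$ factor.

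The hard part will be the between-domain analysis together with the non-i.i.d. structure of $\sset$. Unlike the within-domain step, one cannot apply standard i.i.d. concentration to the whole flattened sample directly; instead one must first integrate out the within-domain randomness to pass to the idealized per-domain expectations $\ep_{\pp{P}^i}\ell$, then control a sum of independent but non-identically distributed terms \emph{while simultaneously} taking the supremum over $f$. Making the identification between this uniform between-domain variance and the distributional-variance trace $\mathrm{tr}(\bmat^{\top}\kxmat\qmat\kxmat\bmat)$ fully rigorous — rather than heuristic — is the technical crux, and it is precisely this identification that justifies DICA's design of simultaneously shrinking the distributional variance and the representation norm.
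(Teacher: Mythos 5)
There is a genuine gap in where you locate the distributional-variance term. You attribute $c_1 \frac{1}{N}\mathrm{tr}(\bmat^{\top}\kxmat\qmat\kxmat\bmat)$ to the between-domain fluctuation of $\frac{1}{N}\sum_i \ep_{\pp{P}^i}\ell$ around $\ep_\bprob\ep_{\pp{P}}\ell$, to be extracted by a Bernstein/Chebyshev argument whose variance proxy is the distributional variance. That route has two problems. First, $\mathrm{tr}(\bmat^{\top}\kxmat\qmat\kxmat\bmat)$ is an \emph{empirical} quantity computed from the observed sample; to use it as the variance in a concentration inequality over the draw of the $N$ domains you would additionally have to relate it to the population variance of $\bprob$ (itself a concentration statement), and you flag exactly this identification as the ``technical crux'' without supplying a mechanism for it. Second, any Bernstein-type bound would attach a $\log\frac{1}{\delta}$ factor to this term (and Chebyshev would give only polynomial $1/\delta$ dependence), whereas in the theorem the distributional-variance term carries no $\delta$-dependence at all --- a sign that it is not a concentration term.

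The paper sidesteps this entirely with a \emph{three}-way decomposition: (A) $\ep_\bprob\ep_{\pp{P}}\ell$ vs.\ $\ep_{\pp{P}^i}\ell$, (B) $\ep_{\pp{P}^i}\ell$ vs.\ $\ep_{\widehat{\pp{P}}^i}\ell$, and (C) $\ep_{\widehat{\pp{P}}^i}\ell$ vs.\ the pooled $\ep_{\hat{\pp{P}}}\ell$, each aggregated as $\frac{2}{N}\sum_i|\cdot|^2$. The distributional variance arises from (C) \emph{deterministically}: after the Lipschitz step the supremum over the unit ball collapses to $\frac{1}{N}\sum_i\|\mu_{\widehat{\pp{P}}^i}\bopt-\mu_{\widehat{\pp{P}}}\bopt\|^2$, which by Lemma~\ref{lem:uniqueness} and $\Phi_x^{\top}\bopt=\kxmat\bmat$ \emph{equals} the empirical distributional variance of the transformed sample --- no concentration is needed. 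The between-domain term (A) is instead bounded by constants times $\|\bopt\|^2_{HS}\cdot\log\frac{1}{\delta}/N$, contributing the last bracket of the theorem, not the $\qmat$-term. Your two-term decomposition collapses (B) and (C) together (indeed, with balanced $n_i$ the first-order sum $\frac{1}{N}\sum_i(\ep_{\widehat{\pp{P}}^i}\ell-\ep_{\hat{\pp{P}}}\ell)$ vanishes identically), so you would never encounter the distributional variance as a deterministic quantity and are forced into the problematic concentration route. The remainder of your outline --- Lipschitz contraction to mean embeddings, within-domain concentration with a union bound over $N$ yielding the $N(\log\frac{1}{\delta}+2\log N)/n$ term, and $\mathrm{tr}(\bmat^{\top}\kxmat\bmat)$ as the squared Hilbert--Schmidt norm of the transform --- does match the paper's treatment of (B) and (A), up to your substitution of a Rademacher argument for the paper's Hoeffding inequality in Hilbert space.
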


The LHS is the difference between the training error and expected error (with respect to the distribution on domains $\bprob$) after applying $\bopt$. 

The first term in the bound, involving $\text{tr}(\bmat^\intercal \kxmat\qmat\kxmat\bmat)$, quantifies the distributional variance after applying the transform: the higher the distributional variance, the worse the guarantee, tying in with analogous results in \citet{Ben-david07:DAF, ben-david:10}. The second term in the bound depends on the size of the distortion ${\rm tr}(\bmat^\intercal\kxmat\bmat)$ introduced by $\bmat$: the more complicated the transform, the worse the guarantee.

The bound reveals a tradeoff between reducing the distributional variance and the complexity or size of the transform used to do so. The denominator of \eqref{eq:ssca} is a sum of these terms, so that DICA tightens the bound in Theorem~\ref{t:lbound}.

Preserving the functional relationship (i.e. central subspace) by maximizing the numerator in \eqref{eq:ssca} should reduce the empirical risk $\ep_{\hat{\pp{P}}} \ell(f(\tilde{X}_{ij}\bopt),Y_i)$. However, a rigorous demonstration has yet to be found.

\section{Experiments}
\label{sec:experiments}

We illustrate the difference between the proposed algorithms and their single-domain counterparts using a synthetic dataset. Furthermore, we evaluate DICA in two tasks: a classification task on flow cytometry data and a regression task for Parkinson's telemonitoring.

\subsection{Toy Experiments}

We generate 10 collections of $n_i\sim\mathtt{Poisson}(200)$ data points. The data in each collection is generated according to a five-dimensional zero-mean Gaussian distribution. For each collection, the covariance of the distribution is generated from Wishart distribution $\mathcal{W}(0.2\times\id_5,10)$. This step is to simulate different marginal distributions. The output value is $y=\mathrm{sign}(b_1^{\top}x + \epsilon_1)\cdot\log(\abs{b_2^{\top}x + c + \epsilon_2})$, where $b_1,b_2$ are the weight vectors, $c$ is a constant, and $\epsilon_1,\epsilon_2\sim\mathcal{N}(0,1)$. Note that $b_1$ and $b_2$ form a low-dimensional subspace that captures the functional relationship between $X$ and $Y$. We then apply the KPCA, UDICA, COIR, and DICA algorithms on the dataset with Gaussian RBF kernels for both $X$ and $Y$ with bandwidth parameters $\sigma_x=\sigma_y=1$, $\lambda=0.1$, and $\varepsilon=10^{-4}$.  

Fig.~\ref{fig:synthetic} shows projections of the training and three previously unseen test datasets onto the first two eigenvectors. The subspaces obtained from UDICA and DICA are more stable than for KPCA and COIR. In particular, COIR shows a substantial difference between training and test data, suggesting overfitting.

\begin{figure*}[t]
  \centering
  \begin{tabular}{c}
    \includegraphics[width=0.7\linewidth]{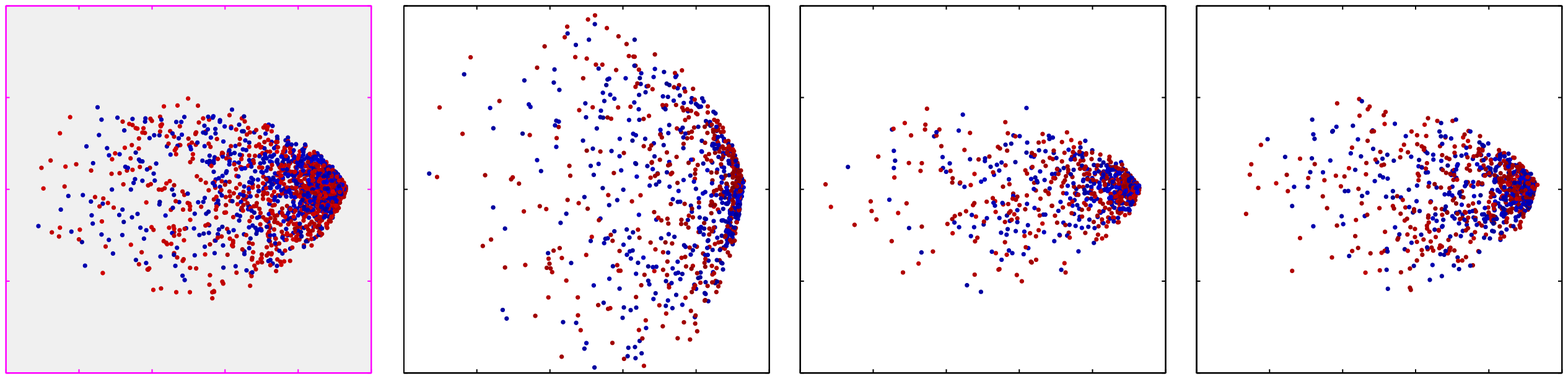} \\ 
    KPCA \\
    \includegraphics[width=0.7\linewidth]{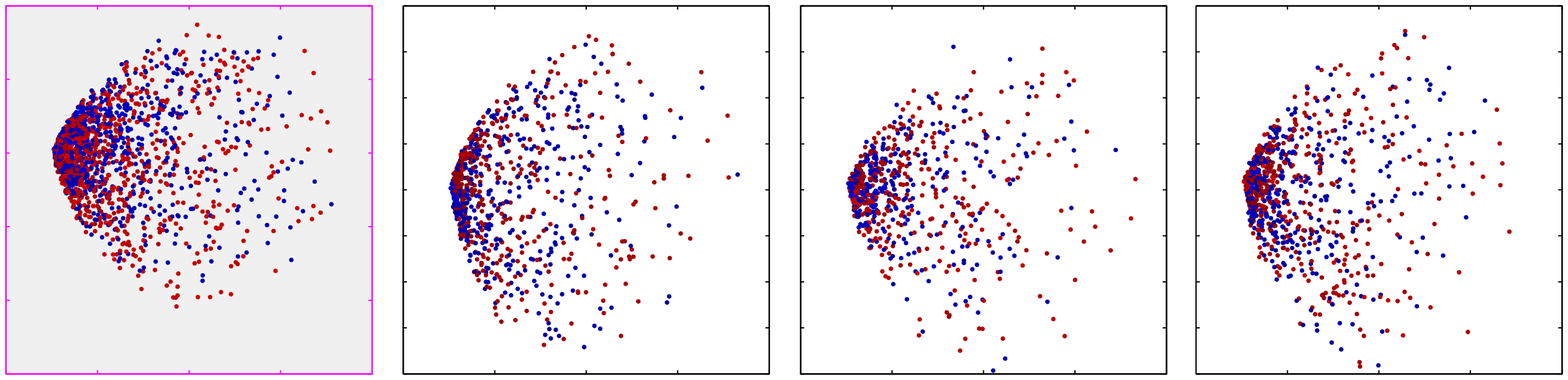} \\
    UDICA \\
    \includegraphics[width=0.7\linewidth]{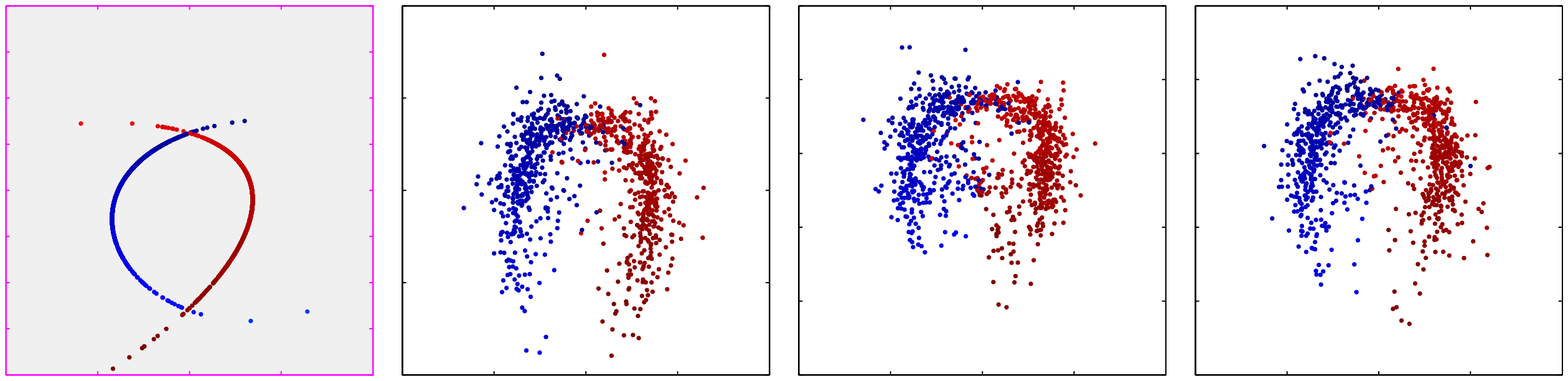} \\
    COIR \\
    \includegraphics[width=0.7\linewidth]{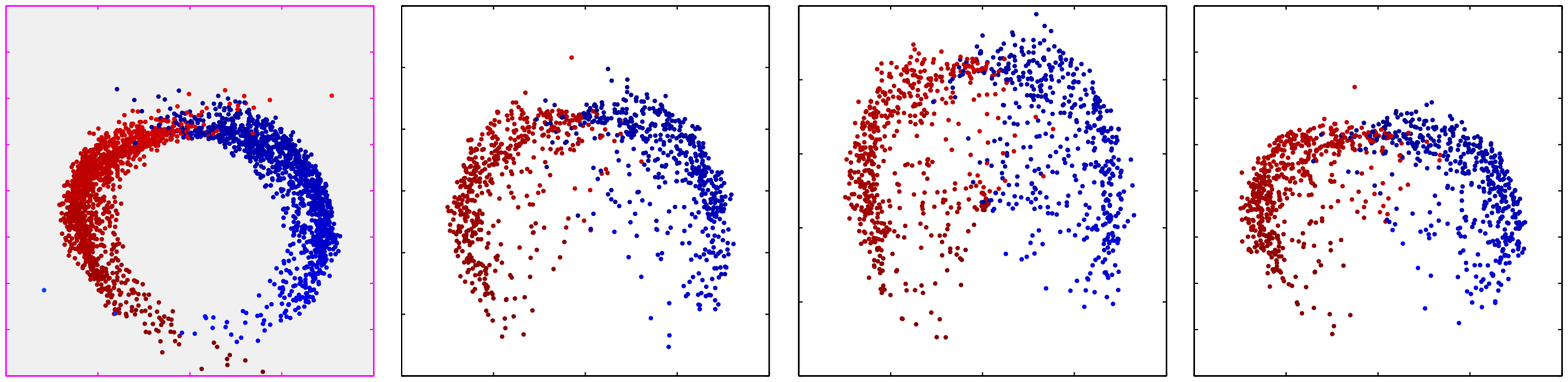} \\
    DICA
  \end{tabular} 
  \caption{Projections of a synthetic dataset onto the first two eigenvectors obtained from the KPCA, UDICA, COIR, and DICA. The colors of data points corresponds to the output values. The shaded boxes depict the projection of training data, whereas the unshaded boxes show projections of unseen test datasets. The feature representations learnt by UDICA and DICA are more stable across test domains than those learnt by KPCA and COIR.
  }
  \label{fig:synthetic} 
\end{figure*} 

\subsection{Gating of Flow Cytometry Data}
\label{sec:cytometry}

Graft-versus-host disease (GvHD) occurs in allogeneic hematopoietic stem cell transplant recipients when donor-immune cells in the graft recognize the recipient as ``foreign'' and initiate an attack on the skin, gut, liver, and other tissues. It is a significant clinical problem in the field of allogeneic blood and marrow transplantation. The GvHD dataset \citep{Brinkman07FCM} consists of weekly peripheral blood samples obtained from 31 patients following allogenic blood and marrow transplant. The goal of gating is to identify $\mbox{CD}3^+\mbox{CD}4^+\mbox{CD}8\beta^+$ cells, which were found to have a high correlation with the development of GvHD \citep{Brinkman07FCM}. We expect to find a subspace of cells that is consistent to the biological variation between patients, and is indicative of the GvHD development. For each patient, we select a dataset that contains sufficient numbers of the target cell populations. As a result, we omit one patient due to insufficient data. The corresponding flow cytometry datasets from 30 patients have sample sizes ranging from 1,000 to 10,000, and the proportion of the $\mbox{CD}3^+\mbox{CD}4^+\mbox{CD}8\beta^+$ cells in each dataset ranges from 10\% to 30\%, depending on the development of the GvHD. 

\begin{table*}[t]
  \centering
  \caption{Average accuracies over 30 random subsamples of GvHD datasets. Pooling SVM applies standard kernel function on the pooled data from multiple domains, whereas distributional SVM also considers similarity between domains using kernel \eqref{eq:dist-kernel}. With sufficiently many samples, DICA outperforms other methods in both pooling and distributional settings. The performance of pooling SVM and distributional SVM are comparable in this case.}
  \resizebox{\linewidth}{!}{
  \begin{tabular}{|l|lll|lll|}    
    \hline
    \multirow{2}{*}{\textbf{Methods}} & \multicolumn{3}{c}{\textbf{Pooling SVM}} & \multicolumn{3}{|c|}{\textbf{Distributional SVM}}\\
    & $n_i=100$ & $n_i=500$ & $n_i=1000$ & $n_i=100$ & $n_i=500$ & $n_i=1000$ \\
    \hline\hline
    Input & 91.68$\pm$.91 & 92.11$\pm$1.14 & 93.57$\pm$.77 & 91.53$\pm$.76 & 92.81$\pm$.93 & 92.41$\pm$.98 \\
    KPCA & 91.65$\pm$.93 & 92.06$\pm$1.15 & 93.59$\pm$.77 & \textbf{91.83$\pm$.60} & 90.86$\pm$1.98 & 92.61$\pm$1.12 \\
    COIR & \textbf{91.71$\pm$.88} & 92.00$\pm$1.05 & 92.57$\pm$.97 & 91.42$\pm$.95 & 91.54$\pm$1.14 & 92.61$\pm$.89 \\
    UDICA & 91.20$\pm$.81 & 92.21$\pm$.19 & 93.02$\pm$.77 & 91.51$\pm$.79 & 91.74$\pm$1.08 & 93.02$\pm$.77 \\
    DICA & 91.37$\pm$.91 & \textbf{92.71$\pm$.82} & \textbf{94.16$\pm$.73} & 91.51$\pm$.89 & \textbf{93.42$\pm$.73} & \textbf{93.33$\pm$.86} \\
    \hline 
  \end{tabular}}
\label{tab:flow-results}
\end{table*}

To evaluate the performance of the proposed algorithms, we took data from $N=10$ patients for training, and the remaining 20 patients for testing. We subsample the training sets and test sets to have 100, 500, and 1,000 data points (cells) each. We compare the SVM classifiers under two settings, namely, a pooling SVM and a distributional SVM. The pooling SVM disregards the inter-patient variation by combining all datasets from different patients, whereas the distributional SVM also takes the inter-patient variation into account via the kernel function \citep{Blanchard:11Generalizing}
\begin{equation}
  \label{eq:dist-kernel}
  K(\tilde{x}^{(i)}_{k},\tilde{x}^{(j)}_{l}) = k_1(\pp{P}^i,\pp{P}^j)\cdot k_2(x_k^{(i)},x_l^{(j)})
\end{equation}
\noindent where $\tilde{x}^{(i)}_{k}=(\pp{P}^i,x_k^{(i)})$ and $k_1$ is the kernel on distributions. We use $k_1(\pp{P}^i,\pp{P}^j)=\exp\left(-\|\mu_{\pp{P}^i}-\mu_{\pp{P}^j}\|^2_{\hbspace}/2\sigma_1^2\right)$ and $k_2(x_k^{(i)},x_l^{(j)})=\exp(-\|x_k^{(i)}-x_l^{(j)}\|^2/2\sigma_2^2)$, where $\mu_{\pp{P}^i}$ is computed using $k_2$. For pooling SVM, the kernel $k_1(\pp{P}^i,\pp{P}^j)$ is constant for any $i$ and $j$. Moreover, we use the output kernel $l(y^{(i)}_k,y^{(j)}_l)=\delta(y^{(i)}_k,y^{(j)}_l)$ where $\delta(a,b)$ is 1 if $a=b$, and 0 otherwise. We compare the performance of the SVMs trained on the preprocessed datasets using the KPCA, COIR, UDICA, and DICA algorithms. It is important to note that we are not defining another kernel on top of the preprocessed data. That is, the kernel $k_2$ for KPCA, COIR, UDICA, and DICA is exactly \eqref{e:bkernel}. We perform 10-fold cross validation on the parameter grids to optimize for accuracy.

\begin{table}[t!]
    \centering
    \caption{The average leave-one-out accuracies over 30 subjects on GvHD data. The distributional SVM outperforms the pooling SVM. DICA improves classifier accuracy.}
    \begin{tabular}{|lrr|}
      \hline
      \textbf{Methods} & \textbf{Pooling SVM} & \textbf{Distributional SVM} \\
      \hline\hline
      Input & 92.03$\pm$8.21 & 93.19$\pm$7.20 \\
      KPCA & 91.99$\pm$9.02 & 93.11$\pm$6.83 \\
      COIR & 92.40$\pm$8.63 & 92.92$\pm$8.20 \\
      UDICA & 92.51$\pm$5.09 & 92.74$\pm$5.01 \\
      DICA & \textbf{92.72$\pm$6.41} & \textbf{94.80$\pm$3.81} \\
      \hline
    \end{tabular}   
    \label{tab:flowloo}
  \end{table}

Table \ref{tab:flow-results} reports average accuracies and their standard deviation over 30 repetitions of the experiments. For sufficiently large number of samples, DICA outperforms other approaches. The pooling SVM and distributional SVM achieve comparable accuracies. The average leave-one-out accuracies over 30 subjects are reported in Table \ref{tab:flowloo} (see supplementary for more detail).

\subsection{Parkinson's Telemonitoring}

To evaluate DICA in a regression setting, we apply it to a Parkinson's telemonitoring dataset\footnote{\href{http://archive.ics.uci.edu/ml/datasets/Parkinson's+Telemonitoring}{\texttt{http://archive.ics.uci.edu/ml/datasets/Parkinson's+Telemonitoring}}}. The dataset consists of biomedical voice measurements from 42 people with early-stage Parkinson's disease recruited for a six-month trial of a telemonitoring device for remote symptom progression monitoring. The aim is to predict the clinician's motor and total UPDRS scoring of Parkinson's disease symptoms from 16 voice measures. There are around 200 recordings per patient.

\begin{table*}[t]
  \centering
  \caption{Root mean square error (RMSE) of the independent Gaussian Process regression (GPR) applied to the Parkinson's telemonitoring dataset. DICA outperforms other approaches in both settings; and the distributional SVM outperforms the pooling SVM.}
  \begin{tabular}{|l|rr|rr|}
    \hline
    \multirow{2}{*}{\textbf{Methods}} & \multicolumn{2}{|c}{\textbf{Pooling GP Regression}} & \multicolumn{2}{|c|}{\textbf{Distributional GP Regression}} \\
    & motor score & total score & motor score & total score \\
    \hline
    LLS & 8.82 $\pm$ 0.77 & 11.80 $\pm$ 1.54 & 8.82 $\pm$ 0.77 & 11.80 $\pm$ 1.54 \\
    Input & 9.58 $\pm$ 1.06 & 12.67 $\pm$ 1.40 & 8.57 $\pm$ 0.77 & 11.50 $\pm$ 1.56 \\
    KPCA & 8.54 $\pm$ 0.89 & 11.20 $\pm$ 1.47 & 8.50 $\pm$ 0.87 & 11.22 $\pm$ 1.49\\
    UDICA & 8.67 $\pm$ 0.83 & 11.36 $\pm$ 1.43 & 8.75 $\pm$ 0.97 & 11.55 $\pm$ 1.52\\
    COIR & 9.25 $\pm$ 0.75 & 12.41 $\pm$ 1.63 & 9.23 $\pm$ 0.90 & 11.97 $\pm$ 2.09\\
    DICA & \textbf{8.40 $\pm$ 0.76} & \textbf{11.05 $\pm$ 1.50} & \textbf{8.35 $\pm$ 0.82} & \textbf{10.02 $\pm$ 1.01}\\
    \hline
  \end{tabular}
  \label{t:parkinson}
\end{table*}

We adopt the same experimental settings as in \S\ref{sec:cytometry}, except that we employ two independent Gaussian Process (GP) regression to predict motor and total UPDRS scores. For COIR and DICA, we consider the output kernel $l(y^{(i)}_k,y^{(j)}_l)=\exp(-\|y^{(i)}_k - y^{(j)}_l\|^2/2\sigma_3^2)$ to fully account for the affinity structure of the output variable. We set $\sigma_3$ to be the median of motor and total UPDRS scores. The voice measurements from 30 patients are used for training and the rest for testing. 

\begin{figure}[t!] 
  \centering
  \includegraphics[height=4in,angle=-90]{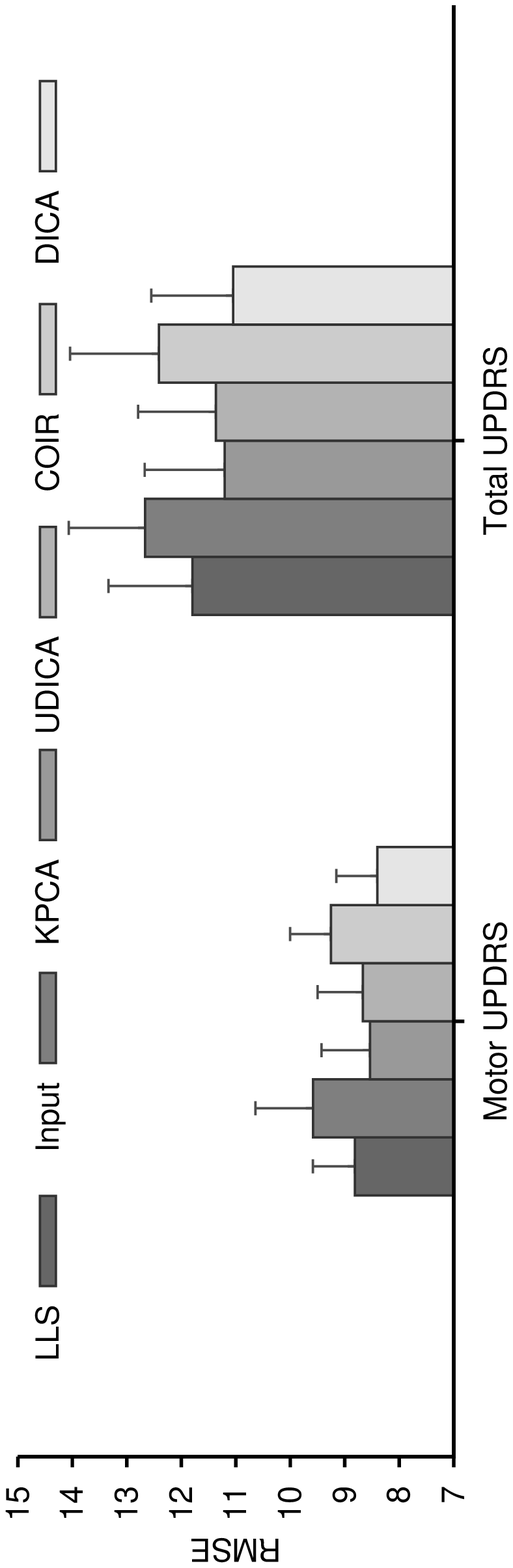}
  \includegraphics[height=4in,angle=-90]{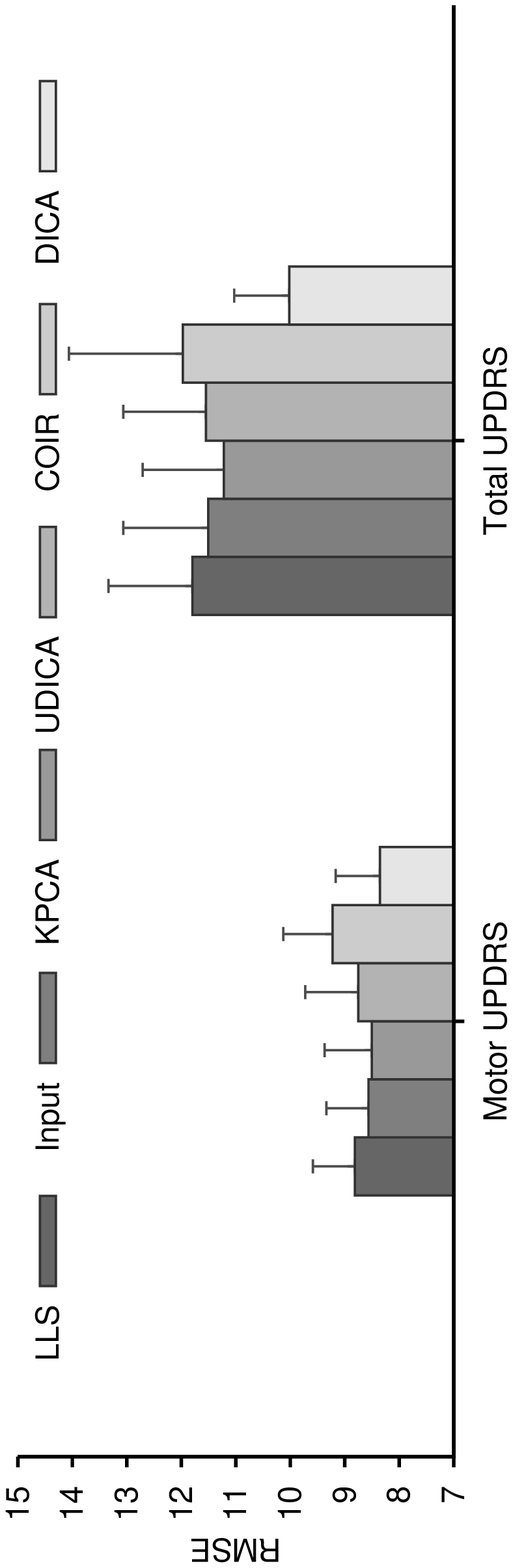}
  \includegraphics[height=4in,angle=-90]{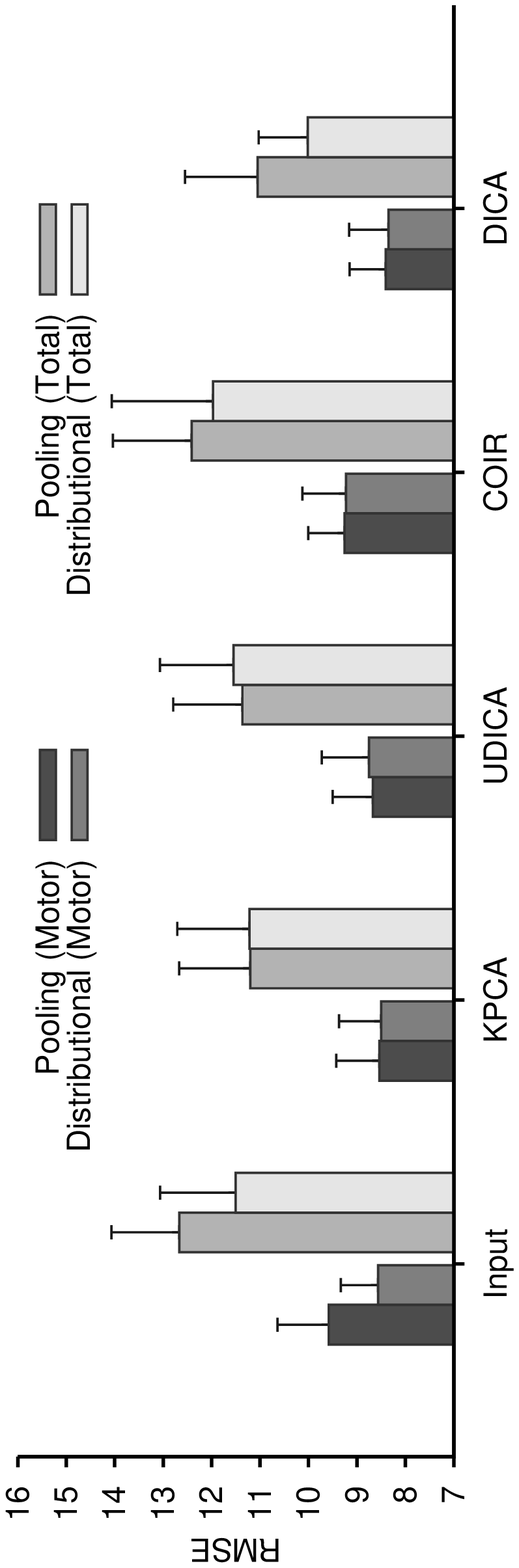}
  \caption{The root mean square error (RMSE) of motor and total UPDRS scores predicted by GP regression after different preprocessing methods on Parkinson's telemonitoring dataset. The top and middle rows depicts the pooling and distributional settings; the bottom row compares the two settings. Results of linear least square (LLS) are given as a baseline.}
  \label{fig:Parkinsons}
\end{figure}

Fig.~\ref{fig:Parkinsons} depicts the results. DICA consistently, though not statistically significantly, outperforms other approaches, see Table~\ref{t:parkinson}.  Inter-patient (i.e. across domain) variation worsens prediction accuracy on new patients. Reducing this variation with DICA improves the accuracy on new patients. Moreover, incorporating the inter-subject variation via distributional GP regression further improves the generalization ability, see Fig.~\ref{fig:Parkinsons}.

\section{Conclusion and Discussion} 


To conclude, we proposed a simple algorithm called Domain-Invariant Component Analysis (DICA) for learning an invariant transformation of the data which has proven significant for domain generalization both theoretically and empirically. Theorem~\ref{t:lbound} shows the generalization error on previously unseen domains grows with the distributional variance. We also showed that DICA generalizes KPCA and COIR, and is closely related to TCA. Finally, experimental results on both synthetic and real-world datasets show DICA performs well in practice. Interestingly, the results also suggest that the distributional SVM, which takes into account inter-domain variation, outperforms the pooling SVM which ignores it.

The motivating assumption in this work is that the functional relationship is stable or varies smoothly across domains. This is a reasonable assumption for automatic gating of flow cytometry data because the inter-subject variation of cell population makes it impossible for domain expert to apply the same gating on all subjects, and similarly makes sense for Parkinson's telemonitoring data. 
Nevertheless, the assumption does not hold in many applications where the conditional distributions are substantially different. It remains unclear how to develop techniques that generalize to previously unseen domains in these scenarios. 

DICA can be adapted to novel applications by equipping the optimization problem with appropriate constraints. For example, one can formulate a semi-supervised extension of DICA by forcing the invariant basis functions to lie on a manifold or preserve a neighborhood structure. Moreover, by incorporating the distributional variance as a regularizer in the objective function, the invariant features and classifier can be optimized simultaneously. 

\subsubsection*{Acknowledgments}
We thank Samory Kpotufe and Kun Zhang for fruitful discussions and the three anonymous reviewers for insightful comments and suggestions that significantly improved the paper. 

\bibliographystyle{abbrvnat}
\bibliography{dica}

\newpage
\appendix

\section{Domain Generalization and Related Frameworks}

The most fundamental assumption in machine learning is that the observations are independent and identically distributed (i.i.d.). That is, each observation comes from the same probability distribution as the others and all are mutually independent. However, this assumption is often violated in practice, in which case the standard machine learning algorithms do not perform well. In the past decades, many techniques have been proposed to tackle scenarios where there is a mismatch between training and test distributions. These include domain adaptation \citep{Bickel09:DLU}, multitask learning \citep{caruana:97}, transfer learning \citep{Pan10:SurveyTF}, covariate/dataset shift \citep{quionero:09} and concept drift \citep{widmer:96}. To better understand domain generalization, we briefly discuss how it relates to some of these approaches.

\subsection{Transfer learning (see e.g., \citet{Pan10:SurveyTF} and references therein).}
Transfer learning aims at transferring knowledge from some previous tasks to a target task when the latter has limited training data. That is, although there may be few labeled examples, ``knowledge'' obtained in related tasks may be available. Transfer learning focuses on improving the learning of the target predictive function using the knowledge in the source task. Although not identical, domain generalization can be viewed as a transfer learning when knowledge of the target task is unavailable during training.

\subsection{Multitask learning (see e.g., \citet{caruana:97} and references therein).}
  The goal of multitask learning is to learn multiple tasks simultaneously -- especially when training examples in each task are scarce. By learning all tasks simultaneously, one expects to improve generalization on  individual tasks. An important assumption is therefore that all the tasks are related. Multitask learning differs from domain generalization because learning the new task often requires retraining.

\subsection{Domain adaptation (see e.g., \citet{Bickel09:DLU} and references therein).}
  Domain adaptation, also known as covariate shift, deals primarily with a mismatch between training and test distributions. Domain generalization deals with a broader setting  where training instances may have been collected from multiple source domains. A second difference is that in domain adaptation one observes the target domain during the training time whereas in domain generalization one does not.

Table \ref{tab:diff-summary} summarizes the main differences between the various frameworks.

\begin{table*}[t!]
  \centering
  \caption{Comparison of domain generalization with other well-known frameworks. Note that the domain generalization is closely related to multi-task learning and domain adaptation. The difference of domain generalization is that one does not observe the target domains in which a classifier will be applied without retraining the classifier.}
  \resizebox{\linewidth}{!}{
  \begin{tabular}{|lccc|}
    \hline
    Framework & Distribution Mismatch & Multiple Sources & Target Domain \\
    \hline 
    Standard Setup & \cross & \cross & \cross \\
    Transfer Learning & \tick & \cross & \tick \\
    Multi-task Learning & \tick & \tick & \cross \\
    Domain Adaptation & \tick & \tick & \tick \\
    Domain Generalization & \tick & \tick & \cross \\
    \hline
  \end{tabular}}
  \label{tab:diff-summary}
\end{table*}

\section{Proof of Theorem \ref{thm:uniqueness}}
 
\begin{mylem}
  \label{lem:uniqueness}
  Given a set of distributions $\mathcal{P}=\{\pp{P}^1,\pp{P}^2\dotsc,\pp{P}^N\}$, the distributional variance of $\mathcal{P}$ is $\mathbb{V}_{\hbspace}(\mathcal{P}) = \frac{1}{N}\sum_{i=1}^N\norm{\abbrvmm{\pp{P}^i} - \abbrvmm{\bar{\pp{P}}}}_{\hbspace}^2$ where $\abbrvmm{\bar{\pp{P}}}=(1/N)\sum_{i=1}^N\abbrvmm{\pp{P}^i}$ and $\bar{\pp{P}}=\frac{1}{N}\sum_{i=1}^N\pp{P}^i$.
\end{mylem}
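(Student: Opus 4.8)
The plan is to recognize the claimed identity as the Hilbert-space analogue of the elementary variance decomposition $\mathrm{Var}(X) = \ep[\norm{X}^2] - \norm{\ep X}^2$, applied to the distribution $\mathcal{P}$ that places mass $1/N$ on each embedding $\abbrvmm{\pp{P}^i}\in\hbspace$. The goal is to show both sides of the equation equal $\frac{1}{N}\sum_i \norm{\abbrvmm{\pp{P}^i}}_{\hbspace}^2 - \norm{\abbrvmm{\bar{\pp{P}}}}_{\hbspace}^2$, after which the lemma follows immediately.

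First I would start from the closed form of the distributional variance in \eqref{eq:tr-variance}, namely $\mathbb{V}_{\hbspace}(\mathcal{P}) = \frac{1}{N}\mathrm{tr}(\gmat) - \frac{1}{N^2}\sum_{i,j}\gmat_{ij}$, and substitute $\gmat_{ij} = \langle\abbrvmm{\pp{P}^i}, \abbrvmm{\pp{P}^j}\rangle_{\hbspace}$ from \eqref{eq:expected-kernel}. The diagonal term collapses to $\frac{1}{N}\sum_i \norm{\abbrvmm{\pp{P}^i}}_{\hbspace}^2$, while bilinearity of the inner product rewrites the full double sum as $\frac{1}{N^2}\norm{\sum_i\abbrvmm{\pp{P}^i}}_{\hbspace}^2$.

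The key structural ingredient — and the only step that is more than bookkeeping — is the linearity of the mean map. Since $\abbrvmm{\pp{P}} = \int k(x,\cdot)\dd\pp{P}(x)$ is an integral against $\pp{P}$ and $\bar{\pp{P}} = \frac{1}{N}\sum_i \pp{P}^i$, linearity of the integral gives $\abbrvmm{\bar{\pp{P}}} = \frac{1}{N}\sum_i \abbrvmm{\pp{P}^i}$, equivalently $\sum_i \abbrvmm{\pp{P}^i} = N\abbrvmm{\bar{\pp{P}}}$. Substituting into the double-sum term yields $\mathbb{V}_{\hbspace}(\mathcal{P}) = \frac{1}{N}\sum_i \norm{\abbrvmm{\pp{P}^i}}_{\hbspace}^2 - \norm{\abbrvmm{\bar{\pp{P}}}}_{\hbspace}^2$.

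Finally I would expand the right-hand side of the claim using the identity $\norm{u - v}_{\hbspace}^2 = \norm{u}_{\hbspace}^2 - 2\langle u, v\rangle_{\hbspace} + \norm{v}_{\hbspace}^2$ with $u = \abbrvmm{\pp{P}^i}$, $v = \abbrvmm{\bar{\pp{P}}}$, and average over $i$. The cross term averages to $-2\norm{\abbrvmm{\bar{\pp{P}}}}_{\hbspace}^2$ precisely because $\frac{1}{N}\sum_i\abbrvmm{\pp{P}^i} = \abbrvmm{\bar{\pp{P}}}$, and combining it with the constant term $+\norm{\abbrvmm{\bar{\pp{P}}}}_{\hbspace}^2$ leaves a single $-\norm{\abbrvmm{\bar{\pp{P}}}}_{\hbspace}^2$, reproducing the expression above. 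There is no genuine obstacle here: the whole argument hinges on the single observation that the embedding commutes with taking mixtures of distributions, and everything else is elementary Hilbert-space algebra.
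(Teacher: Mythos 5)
Your proposal is correct and follows essentially the same route as the paper's proof: both substitute $\gmat_{ij}=\langle\abbrvmm{\pp{P}^i},\abbrvmm{\pp{P}^j}\rangle_{\hbspace}$ into \eqref{eq:tr-variance}, invoke linearity of the mean map to identify $\abbrvmm{\bar{\pp{P}}}=\frac{1}{N}\sum_i\abbrvmm{\pp{P}^i}$, and reduce everything to the elementary identity $\frac{1}{N}\sum_i\norm{\abbrvmm{\pp{P}^i}}^2_{\hbspace}-\norm{\abbrvmm{\bar{\pp{P}}}}^2_{\hbspace}$. The only difference is organizational (you bring both sides to a common normal form, while the paper completes the square directly on the trace expression), which does not change the substance of the argument.
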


\begin{proof}
  Let $\bar{\pp{P}}$ be the probability distribution defined as $(1/N)\sum_{i=1}^N\pp{P}^i$, i.e., $\bar{\pp{P}}(x) = (1/N)\sum_{i=1}^N\pp{P}^i(x)$. It follows from the linearity of the expectation that $\abbrvmm{\bar{\pp{P}}}=(1/N)\sum_{i=1}^N\abbrvmm{\pp{P}^i}$. For brevity, we will denote $\langle\cdot,\cdot\rangle_{\hbspace}$ by $\langle\cdot,\cdot\rangle$. Then, expanding \eqref{eq:tr-variance} gives
  \begin{eqnarray*}    
    \mathbb{V}_{\hbspace}(\mathcal{P}) 
    &=& \frac{1}{N}\mathrm{tr}(\Sigma) = \frac{1}{N}\text{tr}(G) - \dfrac{1}{N^2}\sum_{i,j=1}^{N}G_{ij} \\
    &=& \frac{1}{N}\sum_{i=1}^{N}\langle\abbrvmm{\pp{P}^i},\abbrvmm{\pp{P}^i}\rangle - \frac{1}{N^2}\sum_{i,j=1}^{N}\langle\abbrvmm{\pp{P}^i},\abbrvmm{\pp{P}^j}\rangle \\
    &=& \frac{1}{N}\left[\sum_{i=1}^{N}\langle\abbrvmm{\pp{P}^i},\abbrvmm{\pp{P}^i}\rangle
    - \frac{2}{N}\sum_{i,j=1}^{N}\langle\abbrvmm{\pp{P}^i},\abbrvmm{\pp{P}^j}\rangle + \frac{1}{N}\sum_{i,j=1}^{N}\langle\abbrvmm{\pp{P}^i},\abbrvmm{\pp{P}^j}\rangle\right] \\
    &=& \frac{1}{N}\left[\sum_{i=1}^{N}\langle\abbrvmm{\pp{P}^i},\abbrvmm{\pp{P}^i}\rangle 
    - 2\sum_{i=1}^{N}\left\langle \abbrvmm{\pp{P}^i},\frac{1}{N}\sum_{j=1}^{N}
      \abbrvmm{\pp{P}^j} \right\rangle + N \left\langle\frac{1}{N}\sum_{i=1}^{N}\abbrvmm{\pp{P}^i},\frac{1}{N} \sum_{j=1}^{N}\abbrvmm{\pp{P}^j}\right\rangle \right]\\
    &=& \frac{1}{N}\left[\sum_{i=1}^{N}\langle\abbrvmm{\pp{P}^i},\abbrvmm{\pp{P}^i}\rangle
    - 2\sum_{i=1}^{N}\langle \abbrvmm{\pp{P}^i},\abbrvmm{\bar{\pp{P}}}\rangle
    + N \langle \abbrvmm{\bar{\pp{P}}},\abbrvmm{\bar{\pp{P}}}\rangle \right] \\
    &=& \frac{1}{N}\sum_{i=1}^{N}\Big( \langle\abbrvmm{\pp{P}^i},\abbrvmm{\pp{P}^i}\rangle-2\cdot\langle \abbrvmm{\pp{P}^i},\abbrvmm{\bar{\pp{P}}}\rangle 
    + \langle \abbrvmm{\bar{\pp{P}}},\abbrvmm{\bar{\pp{P}}}\rangle \Big) \\
    &=& \frac{1}{N}\sum_{i=1}^{N}\norm{\abbrvmm{\pp{P}^i} - \abbrvmm{\bar{\pp{P}}}}_{\hbspace}^2 \enspace ,
  \end{eqnarray*}
  \noindent which completes the proof.
\end{proof}

\begin{thmtag}{\ref{thm:uniqueness}}
  \emph{For a characteristic kernel $k$, $\mathbb{V}_{\hbspace}(\mathcal{P})=0$  if and only if $\pp{P}^1 =\pp{P}^2=\cdots=\pp{P}^{N}$.}
\end{thmtag}

\begin{proof}
  Since $k$ is characteristic, $\norm{\abbrvmm{\pp{P}}-\abbrvmm{\pp{Q}}}^2_{\hbspace}$ is a metric and is zero iff $\pp{P}=\pp{Q}$ for any distributions $\pp{P}$ and $\pp{Q}$ \citep{Sriperumbudur10:Metrics}. By Lemma~\ref{lem:uniqueness}, $\mathbb{V}_{\hbspace}(\mathcal{P}) = \frac{1}{N}\sum_{i=1}^N\norm{\abbrvmm{\pp{P}^i} - \abbrvmm{\bar{\pp{P}}}}_{\hbspace}^2.$ Thus, $\norm{\abbrvmm{\pp{P}^i} - \abbrvmm{\bar{\pp{P}}}}_{\hbspace}^2=0$ iff $\pp{P}^i
  =\bar{\pp{P}}$. Consequently, if $\mathbb{V}_{\hbspace}(\mathcal{P})$ is zero, this implies that
  $\pp{P}^i  =\bar{\pp{P}}$ for all $i$, meaning that $\pp{P}^1=\cdots=\pp{P}^{\ell}$.
  Conversely, if $\pp{P}^1=\cdots=\pp{P}^{\ell}$, then $\norm{\abbrvmm{\pp{P}^i} - \abbrvmm{\bar{\pp{P}}}}_{\hbspace}^2=0$ is zero for all $i$ and thereby $\mathbb{V}_{\hbspace}(\mathcal{P})=\frac{1}{N}\sum_{i=1}^N\norm{\abbrvmm{\pp{P}^i} - \abbrvmm{\bar{\pp{P}}}}_{\hbspace}^2$ is zero. 
\end{proof}

\section{Proof of Theorem \ref{thm:vestimator}}

\begin{thmtag}{\ref{thm:vestimator}}
 \emph{ The empirical estimator $\widehat{\mathbb{V}}_{\hbspace}(\sset) = \frac{1}{N}\mathrm{tr}(\widehat{\Sigma}) = \mathrm{tr}(\kxmat\qmat)$ obtained from Gram matrix
   \begin{equation*}
     \widehat{\gmat}_{ij} 
    :=\dfrac{1}{n_i\cdot n_j}\sum_{k=1}^{n_i}\sum_{l=1}^{n_j}
     k(x^{(i)}_k,x^{(j)}_l)
   \end{equation*}
   is a consistent estimator of $\mathbb{V}_{\hbspace}({\mathcal P})$.}
\end{thmtag}

\begin{proof}
Recall that
\begin{equation*}
  \mathbb{V}_{\hbspace}(\mathcal{P}) = \frac{1}{N}\mathrm{tr}(G) - \frac{1}{N^2}\sum_{i,j=1}^N G_{ij} \enspace \text{ and } \enspace \widehat{\mathbb{V}}_{\hbspace}(\mathcal{S}) = \frac{1}{N}\mathrm{tr}(\widehat{G}) - \frac{1}{N^2}\sum_{i,j=1}^N \widehat{G}_{ij}
\end{equation*}
\noindent where 
\begin{eqnarray*}
  G_{ij}&=& \langle \mu_{\pp{P}^i},\mu_{\pp{P}^j}\rangle_{\hbspace}= \iint k(x,z) \dd\pp{P}^i(x)\dd\pp{P}^j(z)\\
  \widehat{G}_{ij}&=& \langle \hat{\mu}_{\pp{P}^i},\hat{\mu}_{\pp{P}^j}\rangle_{\hbspace} = \frac{1}{n_in_j}\sum_{k=1}^{n_i}\sum_{l=1}^{n_j}k(x_k^{(i)},x_l^{(j)}) 
\end{eqnarray*}
By Theorem 15 in \citet{Altun06:unifyingdivergence}, we have a fast convergence of $\hat{\mu}_{\pp{P}}$ to $\mu_{\pp{P}}$. Consequently, we have $\widehat{G}\rightarrow G$, which implies that $\widehat{\mathbb{V}}_{\hbspace}(\mathcal{S}) \rightarrow\mathbb{V}_{\hbspace}(\mathcal{P})$. Hence, $\widehat{\mathbb{V}}_{\hbspace}(\mathcal{S})$ is a consistent estimator of $\mathbb{V}_{\hbspace}(\mathcal{P})$.
\end{proof}

\section{Derivation of Eq. \eqref{eq:flda}}

DICA employs the covariance of inverse regressor $\mathbb{V}(\mathbb{E}[\phi(X)|Y])$, which can be written in terms of covariance operators. Let $\hbspace$ and $\hbspacey$ be the RKHSes of $X$ and $Y$ endowed with reproducing kernels $k$ and $l$, respectively. Let $\covxx$, $\covyy$, $\covxy$, and $\covyx$ be the covariance operators in and between the corresponding RKHSes of $X$ and $Y$. We define the conditional covariance operator of $X$ given $Y$, denoted by $\Sigma_{\mathrm{xx}|\mathrm{y}}$, as
\begin{equation}
  \label{eq:condcov-optr}
  \Sigma_{\mathrm{xx}|\mathrm{y}} \triangleq \covxx - \covxy\covyy^{-1}\covyx \enspace .
\end{equation}

The following theorem from \citet{Fukumizu04:RKHS} states that, under mild conditions, $\Sigma_{\mathrm{xx}|\mathrm{y}}$ equals the expected conditional variance of $\phi(X)$ given $Y$.
\begin{mythm}
  \label{thm:cov-optr}
  For any $f\in\hbspace$, if there exists $g\in\hbspacey$ such that $\mathbb{E}[f(X)|Y]=g(Y)$ for almost every $Y$, then $\Sigma_{\mathrm{xx}|\mathrm{y}} = \mathbb{E}[\mathbb{V}(\phi(X)|Y)]$.
\end{mythm}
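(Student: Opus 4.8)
The plan is to reduce the operator identity to an identity between bilinear forms and then to invoke the law of total covariance. Recall that the covariance operators are characterized by $\langle f,\covxx g\rangle_{\hbspace}=\mathrm{Cov}[f(X),g(X)]$ and $\langle h,\covyx f\rangle_{\hbspacey}=\mathrm{Cov}[h(Y),f(X)]$ for $f,g\in\hbspace$ and $h\in\hbspacey$, and analogously for $\covxy$ and $\covyy$; in particular $\covxy=\covyx^{*}$ and $\covyy$ is self-adjoint. Likewise, the expected conditional covariance operator is characterized by $\langle f,\ep[\mathbb{V}(\phi(X)|Y)]\,g\rangle_{\hbspace}=\ep_Y\big[\mathrm{Cov}[f(X),g(X)\,|\,Y]\big]$. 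Since both $\Sigma_{\mathrm{xx}|\mathrm{y}}$ and $\ep[\mathbb{V}(\phi(X)|Y)]$ are self-adjoint, it suffices to verify that $\langle f,\Sigma_{\mathrm{xx}|\mathrm{y}}\,g\rangle_{\hbspace}=\ep_Y[\mathrm{Cov}[f(X),g(X)\,|\,Y]]$ for all $f,g\in\hbspace$; the operator identity then follows by polarization.

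First I would establish the key lemma: if $g\in\hbspacey$ satisfies $g(Y)=\ep[f(X)|Y]$ almost surely, then $\covyx f=\covyy g$. This follows from the tower property, since for every $h\in\hbspacey$ we have $\langle h,\covyx f\rangle_{\hbspacey}=\mathrm{Cov}[h(Y),f(X)]=\ep[h(Y)\ep[f(X)|Y]]-\ep[h(Y)]\ep[f(X)]=\mathrm{Cov}[h(Y),g(Y)]=\langle h,\covyy g\rangle_{\hbspacey}$, and $h$ is arbitrary. Under the stated hypothesis this holds for every $f\in\hbspace$, so the conditional-expectation representative is $\covyy^{-1}\covyx f$ read off on the range of $\covyy$; what the hypothesis buys is precisely that $\covyx f$ lies in $\mathrm{range}(\covyy)$, making $\covyy^{-1}\covyx f$ meaningful and equal to that representative.

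Next I would apply the law of total covariance, $\mathrm{Cov}[f(X),g(X)]=\ep_Y[\mathrm{Cov}[f(X),g(X)\,|\,Y]]+\mathrm{Cov}[\ep[f(X)|Y],\ep[g(X)|Y]]$. The left-hand side is $\langle f,\covxx g\rangle$; the first summand on the right is the target quantity; and, writing $g_f,g_g\in\hbspacey$ for the conditional-expectation representatives of $f(X)$ and $g(X)$, the last summand becomes $\mathrm{Cov}[g_f(Y),g_g(Y)]=\langle g_f,\covyy g_g\rangle=\langle\covyy^{-1}\covyx f,\covyx g\rangle=\langle f,\covxy\covyy^{-1}\covyx g\rangle$, using the lemma twice together with $\covxy=\covyx^{*}$. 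Rearranging yields $\ep_Y[\mathrm{Cov}[f(X),g(X)\,|\,Y]]=\langle f,(\covxx-\covxy\covyy^{-1}\covyx)g\rangle=\langle f,\Sigma_{\mathrm{xx}|\mathrm{y}}\,g\rangle$, which is exactly the desired bilinear-form identity, and the claim follows.

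The main obstacle is the careful handling of $\covyy^{-1}$: in general $\covyy$ is not boundedly invertible, so $\covyy^{-1}\covyx f$ is a priori ill-defined. The role of the hypothesis — that $\ep[f(X)|Y]$ admits a representative $g\in\hbspacey$ — is precisely to guarantee the range condition $\covyx f\in\mathrm{range}(\covyy)$, ensuring that $\covxy\covyy^{-1}\covyx$ isolates the explained variance $\mathrm{Var}[\ep[f(X)|Y]]$ rather than something larger. Making this range condition, together with the measurability and square-integrability needed for the tower property and the law of total covariance to transfer verbatim into the RKHS, fully rigorous is the delicate part; once the lemma is in place the remaining algebra is routine.
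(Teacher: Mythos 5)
Your proof is correct, but there is nothing in the paper to compare it against: the paper does not prove this statement at all. It is imported verbatim from the reference it cites (Fukumizu et al., 2004) and used as a black box --- the appendix only combines it with the law of total variance to deduce $\mathbb{V}(\ep[\phi(X)|Y])=\covxy\covyy^{-1}\covyx$. What you have written is essentially the standard argument from that cited source, and both load-bearing steps are sound. The lemma $\covyx f=\covyy g_f$ follows from the tower property exactly as you say, and the hypothesis quantified over all $f\in\hbspace$ is what guarantees $\covyx f\in\mathrm{range}(\covyy)$ for every $f$, so that $\covyy^{-1}\covyx$ is meaningful on the relevant vectors; the law of total covariance then turns $\langle f,(\covxx-\covxy\covyy^{-1}\covyx)g\rangle$ into $\ep_Y\bigl[\mathrm{Cov}(f(X),g(X)\mid Y)\bigr]$, and polarization upgrades equality of the bilinear forms of two self-adjoint operators to the operator identity. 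You are right that $\covyy^{-1}$ is the delicate point; one remark worth making explicit is that $g_f$ is determined only modulo $\ker\covyy$, but since $\covyx g=\covyy g_g\in\mathrm{range}(\covyy)\subseteq(\ker\covyy)^{\perp}$, this ambiguity does not affect $\langle g_f,\covyx g\rangle$, so your chain of equalities is safe. A minor stylistic issue: you use $g$ both for the second test function in $\hbspace$ and for the representative in $\hbspacey$ appearing in the lemma; the later $g_f,g_g$ notation resolves this, but separating the symbols from the outset would make the argument easier to read.
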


Using the $E\text{-}V\text{-}V\text{-}E$ identity\footnote{$\mathbb{V}(X) = \mathbb{E}[\mathbb{V}(X|Y)] + \mathbb{V}(\mathbb{E}[X|Y])$ for any $X,Y$.}, the covariance $\mathbb{V}(\mathbb{E}[\phi(X)|Y])$ can be expressed in terms of the conditional covariance operators as follow:
\begin{equation}
  \label{eq:evve}
  \mathbb{V}(\mathbb{E}[\phi(X)|Y]) = \mathbb{V}(\phi(X)) - \mathbb{E}[\mathbb{V}(\phi(X)|Y)],
\end{equation}
assuming that the inverse regressor $\mathbb{E}[f(x)|y]$ is a smooth function of $y$ for any $f\in\hbspace$. 

By virtue of Theorem \ref{thm:cov-optr}, the second term in the r.h.s. of \eqref{eq:evve} is $\Sigma_{\mathrm{xx}|\mathrm{y}}$. Since $\mathbb{V}(\phi(X)) = \mathrm{Cov}(\phi(x),\phi(x))=\covxx$, it follows from \eqref{eq:condcov-optr} that the covariance of the inverse regression $\mathbb{V}(\mathbb{E}[\phi(X)]|Y)$ can be expressed as
\begin{equation}
  \label{eq:covariance}
  \mathbb{V}(\mathbb{E}[\phi(X)|Y]) = \covxy\covyy^{-1}\covyx \enspace .
\end{equation}

The covariance \eqref{eq:covariance} can be estimated from finite samples $(x_1,y_1),\dotsc,(x_n,y_n)$ by $\widehat{\mathbb{V}}(\mathbb{E}[\phi(X)|Y]) = \widehat{\Sigma}_{\mathrm{xy}}\widehat{\Sigma}_{\mathrm{yy}}^{-1}\widehat{\Sigma}_{\mathrm{yx}}$ where $\widehat{\Sigma}_{\mathrm{xy}} = \frac{1}{n}\Phi_x\Phi_y^{\top}$ and $\Phi_x = [\phi(x_1),\dotsc,\phi(x_n)]$ and $\Phi_y = [\varphi(y_1),\dotsc,\varphi(y_n)]$. Let $\kxmat$ and $\kymat$ denote the kernel matrices computed over samples $\{x_1,x_2,\dotsc,x_n\}$ and $\{y_1,y_2,\dotsc,y_n\}$, respectively. We have
\begin{flalign}
   \widehat{\mathbb{V}}(\mathbb{E}[\phi(X)|Y])  
  & = \left(\frac{1}{n}\Phi_x\Phi_y^{\top}\right)\left(\frac{1}{n}(\Phi_y\Phi_y^{\top}
    + n\varepsilon \mathcal{I}) \right)^{-1}\left(\frac{1}{n}\Phi_y\Phi_x^{\top}\right) \nonumber \\
  & = \frac{1}{n}\Phi_x\Phi_y^{\top}\Phi_y\left(\Phi_y^{\top}\Phi_y 
    + n\varepsilon I_n\right)^{-1}\Phi_x^{\top} \nonumber \\
  & = \frac{1}{n}\Phi_x\kymat\left(\kymat + n\varepsilon I_n\right)^{-1}\Phi_x^{\top} 
  \label{eq:cov-ir} 
\end{flalign}
\noindent where $\kymat=\Phi_y^{\top}\Phi_y$ and $\mathcal{I}$ is the identity operator. The second equation is obtained by applying the fact that $(\Phi_y\Phi_y^{\top} + n\varepsilon\mathcal{I})\Phi_y = \Phi_y(\Phi_y^{\top}\Phi_y + n\varepsilon\id_n)$. 

Finally, using $\widehat{\Sigma}_{\mathrm{xx}}=\frac{1}{n}\Phi_x\Phi_x^{\top}$ and recalling that $\kxmat=\Phi_x^{\top}\Phi_x$, we obtain
\begin{align*}
	\bvec_k^{\top}\widehat{\Sigma}_{\mathrm{xx}}^{-1}\widehat{\mathbb{V}}(\ep[X|Y])\widehat{\Sigma}_{\mathrm{xx}}\bvec_k 
	& = \bvec_k^{\top}\left(\frac{1}{n}\Phi_x\Phi_x^{\top}\right)^{-1}\left(\frac{1}{n}\Phi_x\kymat\left(\kymat + n\varepsilon 
	I_n\right)^{-1}\Phi_x^{\top}\right)\left(\frac{1}{n}\Phi_x\Phi_x^{\top}\right)\bvec_k \\
	& = \frac{1}{n}\bm{\beta}_k^{\top}\Phi_x^{\top}\left(\Phi_x\Phi_x^{\top}\right)^{-1}\Phi_x\kymat\left(\kymat + n\varepsilon
	I_n\right)^{-1}\Phi_x^{\top}\left(\Phi_x\Phi_x^{\top}\right)\Phi_x\bm{\beta}_k \\
      & = \frac{1}{n}\bm{\beta}_k^{\top}\Phi_x^{\top}\Phi_x\left(\Phi_x^{\top}\Phi_x\right)^{-1}\kymat\left(\kymat + n\varepsilon
	I_n\right)^{-1}\Phi_x^{\top}\left(\Phi_x\Phi_x^{\top}\right)\Phi_x\bm{\beta}_k \\
      & = \frac{1}{n}\bm{\beta}_k^{\top}\kymat(\kymat+n\varepsilon\id)^{-1}\kxmat^2\bm{\beta}_k
  \end{align*}
and 
  \begin{equation*}
  \bvec_k^{\top}\bvec_k 
  =\bm{\beta}_k^{\top} \Phi_x^{\top}\Phi_x\bm{\beta}_k 
  = \bm{\beta}_k^{\top}\kxmat\bm{\beta}_k 
\end{equation*}
\noindent as desired.

\section{Derivation of Lagrangian \eqref{eq:ssca-lagragian}}

Observe that optimization 
\begin{equation}
	\underset{\bmat\in\rr^{n\times m}}{\text{max}}\,
	\frac{\mathrm{tr}\left(\bmat^\top X\bmat\right)}{\mathrm{tr}\left(\bmat^\top Y\bmat\right)}
	\label{e:ratio}
\end{equation}
is invariant to rescaling $\bmat \mapsto \alpha\cdot \bmat$. Optimization \eqref{e:ratio} is therefore equivalent to
\begin{align*}
	\underset{\bmat\in\rr^{n\times m}}{\text{max}}\, &
	\mathrm{tr}\left(\bmat^\top X\bmat\right)\\
	\text{subject to: } & \mathrm{tr}\left(\bmat^\top Y\bmat\right) = 1,
\end{align*}
which yields Lagrangian
\begin{equation}
    \mathcal{L} = 
	\mathrm{tr}\left(\bmat^{\top}X \bmat\right) 
    - \mathrm{tr}\left(\left(\bmat^{\top}Y\bmat - \id\right)\Gamma\right).
\end{equation}

\section{Proof of Theorem~\ref{t:lbound}}

We consider a scenario where distributions $\pp{P}^i$ are drawn according to $\bprob$ with probability $\mu_i$. Introduce shorthand $\tilde{X}_{ij}$ for $(\pp{P}^{(i)},X_{ij})$ for a distribution on $\pspace_\inspace$ and a corresponding random variable on $\inspace$. 

The quantity of interest is the difference between the expected and empirical loss of a classifier $f:\pspace_\inspace\times \inspace\rightarrow \outspace$ under loss function $\ell:\outspace\times\outspace\rightarrow \rr_+$.

\textbf{Assumptions.}
The loss function $\ell:\rr\times\outspace\rightarrow \rr_+$ is $\phi_{\ell}$-Lipschitz in its first variable and bounded by $U_{\ell}$. The kernel $k_\inspace$ is bounded by $U_\inspace$. Assume that all distributions in $\bprob$ are mapped into a ball of size $U_\pspace$ by $\Psi_\pspace$. Finally, since $k_\pspace$ is a is a square exponential, there is a constant $L_\pspace$ such that 
\begin{equation*}
	\|\Phi_\pspace(v)-\Phi_\pspace(w)\|\leq L_\pspace\|v-w\| \text{ for all }v,w.
\end{equation*}

Recall that $N$ is the number of sampled domains, $n_i$ is the number of samples in domain $i$, and $n=\sum_{i=1}^N n_i$ is the total number of samples. The proof assumes $n_i=n_j$ for all $i,j$.

\begin{thmtag}{\ref{t:lbound}.}{}
\emph{Assumes the conditions above hold. Then with probability at least $1-\delta$
	\begin{gather*}
          \sup_{\|f\|_{\hbspace}\leq 1} 
          \left|\ep_\bprob \ep_\pp{P}\ell(f(\tilde{X}_{ij}\bopt),Y_i) 
            -\ep_{\hat{\pp{P}}} \ell(f(\tilde{X}_{ij}\bopt),Y_i)\right|^2\\
          \leq  c_1 \frac{1}{N} \text{tr}(\bmat^\intercal \kxmat\lmat\kxmat\bmat)
          + \text{tr}(\bmat^\top\kxmat\bmat)\left(c_2\frac{N\cdot(\log\delta^{-1}+2\log N)}{n}
            + c_3\frac{\log\delta^{-1}}{N}
            + \frac{c_4}{N}\right).
	\end{gather*}
      }
\end{thmtag}
\begin{rem}
	Recall that $\Phi_x=[\phi(x_1),\ldots,\phi(x_n)]$. The composition $x_t\mapsto \mathbf{k}_t\cdot \bmat$, where $\mathbf{k}_t=[k(x_1,x_t),\ldots,k(x_n,x_t)]$, can therefore be rewritten as $\phi(x_t)\cdot\bopt=\phi(x_t)\cdot\Phi_x\cdot\bmat$.	
\end{rem}

\begin{proof}
	The proof modifies the approach taken in \citet{Blanchard:11Generalizing} to handle the preprocessing via transform $\bopt$, and the fact that we work with \emph{squared} errors. Parts of the proof that pass through largely unchanged are omitted.
		
	We repeatedly apply the inequality $|a+b|^2\leq 2|a|^2+2|b|^2$. However, we only incur the multiplication-by-2 penalty once since $|a_1+\cdots+a_n|^2\leq 2|a_1|^2+\cdots+2|a_n|^2$. 
	
	Decompose
\begin{align*}
 &\sup_{\|f\|_{\hbspace}\leq 1}
	\left|\ep_\bprob \ep_\pp{P}\ell(f(\tilde{X}_{ij}\bopt),Y_i) 
	-\ep_{\hat{\pp{P}}} \ell(f(\tilde{X}_{ij}\bopt),Y_i)\right|^2 & \\
      & \leq \sup_{\|f\|_{\hbspace}\leq 1}\frac{2}{N}\sum_{i=1}^{N}
	\left|\ep_\bprob \ep_\pp{P}\ell(f(\tilde{X}_{ij}\bopt),Y_i) 
	-\ep_{\pp{P}^i} \ell(f(\tilde{X}_{ij}\bopt),Y_i)\right|^2 & \\
      & +\sup_{\|f\|_{\hbspace}\leq 1}\frac{2}{N}\sum_{i=1}^{N}
	\left|\ep_{\pp{P}^i}\ell(f(\tilde{X}_{ij}\bopt),Y_i) 
	-\ep_{\widehat{\pp{P}}^i} \ell(f(\tilde{X}_{ij}\bopt),Y_i)\right|^2 & \\
      & +\sup_{\|f\|_{\hbspace}\leq 1}\frac{2}{N}\sum_{i=1}^{N}
	\left|\ep_{\widehat{\pp{P}}^i}\ell(f(\tilde{X}_{ij}\bopt),Y_i) 
	-\ep_{\hat{\pp{P}}} \ell(f(\tilde{X}_{ij}\bopt),Y_i)\right|^2 & \\	
      &= (A) + (B) + (C) \enspace .
\end{align*}

\textbf{Control of (C):}
\begin{align*}
	(C) & = \sup_{\|f\|_{\hbspace}\leq 1}\frac{2}{N}\sum_{i=1}^{N}
	\left|\ep_{\widehat{\pp{P}}^i}\ell(f(\tilde{X}_{ij}\bopt),Y_i) 
	-\ep_{\hat{\pp{P}}} \ell(f(\tilde{X}_{ij}\bopt),Y_i)\right|^2\\
	& \leq \phi_{\ell}^2\sup_{\|f\|_{\hbspace}\leq 1} \frac{2}{N}\sum_{i=1}^{N} 
	\left|\ep_{\widehat{\pp{P}}^i}f(\tilde{X}_{ij}\bopt)
	- \ep_{\hat{\pp{P}}} f(\tilde{X}_{ij}\bopt)\right|^2 \\
	& = \phi_{\ell}^2\cdot \frac{2}{N}\sum_{i=1}^{N} 
	\left\|\Psi_\pspace(\widehat{\pp{P}^i})\otimes\mu_{\widehat{\pp{P}}^i} \bopt
	- \Psi_\pspace(\widehat{\pp{P}})\otimes\mu_{\widehat{\pp{P}}}\bopt\right\|^2
\end{align*}
	Note that $\|\Psi_\pspace(\mu(\pp{P}))\|^2\leq L_\pspace\cdot \| \mu_{\pp{P}}\|^2\leq L_\pspace U_\pspace$. Therefore,
\begin{align*}
	(C) & \leq \phi_{\ell}^2  L_\pspace U_\pspace \frac{2}{N}\sum_{i=1}^{N}
	\left\|\mu_{\widehat{\pp{P}}^i}\bopt - \mu_{\widehat{\pp{P}}}\bopt\right\|^2.
\end{align*}
By the proof of Theorem~\ref{thm:uniqueness} and since $\Phi^\top_x\bopt=\kxmat\bmat$, we have
\begin{align*}
	(C) & \leq 2\phi_{\ell}^2 L_\pspace U_\pspace\frac{1}{N} \text{tr}(\kxmat\bmat\bmat^\intercal \kxmat\lmat).
\end{align*}

\textbf{Control of (B):}
Similarly,
\begin{align*}
	(B) & = \sup_{\|f\|_{\hbspace}\leq 1}\frac{2}{N}\sum_{i=1}^{N}
	\left|\ep_{\pp{P}^i}\ell(f(\tilde{X}_{ij}\bopt),Y_i) 
	-\ep_{\widehat{\pp{P}}^i} \ell(f(\tilde{X}_{ij}\bopt),Y_i)\right|^2\\
	& \leq 2\phi_{\ell}^2 L_\pspace U_\pspace\cdot\frac{1}{N}\sum_{i=1}^{N} 
	\left\|\mu_{\pp{P}^i}\bopt - \mu_{\widehat{\pp{P}}^i}\bopt\right\|^2 \\
	& \leq 2\phi_{\ell}^2 L_\pspace U_\pspace\cdot \|\bopt\|^2_{HS}\cdot 
	\frac{1}{N}\sum_{i=1}^{N}\left\|\mu_{\pp{P}^i} - \mu_{\widehat{\pp{P}}^i}\right\|^2
\end{align*}

Here we follow the strategy applied by \citet{Blanchard:11Generalizing} to control their term (I) in Theorem 5.1. Assume $n_i=n_j$ for all $i,j$ and recall $n=\sum_{i=1}^N n_i$ so $n_i=n/N$ for all $i$. 
	
	By Hoeffding's inequality in Hilbert space, with probability greater than $1-\delta$ the following inequality holds
	\begin{equation*}
		\left\|\frac{1}{n_i}\sum_{j=1}^{n_i} \mu(\hat{X}_{ij})
		-  \ep_{\pp{P}^{(i)}}\mu( X_{ij})\right\|^2
		\leq 9U_\inspace\frac{N\cdot\log2\delta^{-1}}{n}.
	\end{equation*}
	Applying the union bound obtains
	\begin{equation*}
		(Ib) \leq 18\phi_{\ell}^2 L_\pspace U_\pspace U_\inspace\cdot\|\bopt\|^2_{HS}\cdot
		\frac{N\cdot(\log\delta^{-1}+2\log N)}{n}.
	\end{equation*}

\textbf{Control of (A):}	
\begin{align*}
	(A) & = \sup_{\|f\|_{\hbspace}\leq 1}\frac{2}{N}\sum_{i=1}^{N}
	\left|\ep_\bprob \ep_\pp{P}\ell(f(\tilde{X}_{ij}\bopt),Y_i) 
	-\ep_{\pp{P}^i} \ell(f(\tilde{X}_{ij}\bopt),Y_i)\right|^2
\end{align*}
Following the strategy used by \citet{Blanchard:11Generalizing} to control (II) in Theorem 5.1, we obtain
\begin{align*}
	(A) & \leq  c_3\frac{\phi_{\ell}^2U_\inspace^2 U_\pspace + U_{\ell}\log\delta^{-1}}{N}\cdot\|\bopt\|^2_{HS}.\\
\end{align*}
\textbf{End of proof:}
We have that $\kxmat$ is invertible since $\widehat{\Sigma}_{xx}$ is assumed to be invertible.  It follows that the trace $\text{tr}(\bmat^\top\kxmat\bmat)$ defines a norm which coincides with the Hilbert-Schmidt norm $\|\bopt\|^2_{HS}$. Combining the three inequalities above concludes the proof.
\end{proof}

\section{Leave-one-out accuracy}

\begin{figure}[t!]
  \centering
  \includegraphics[width=\linewidth]{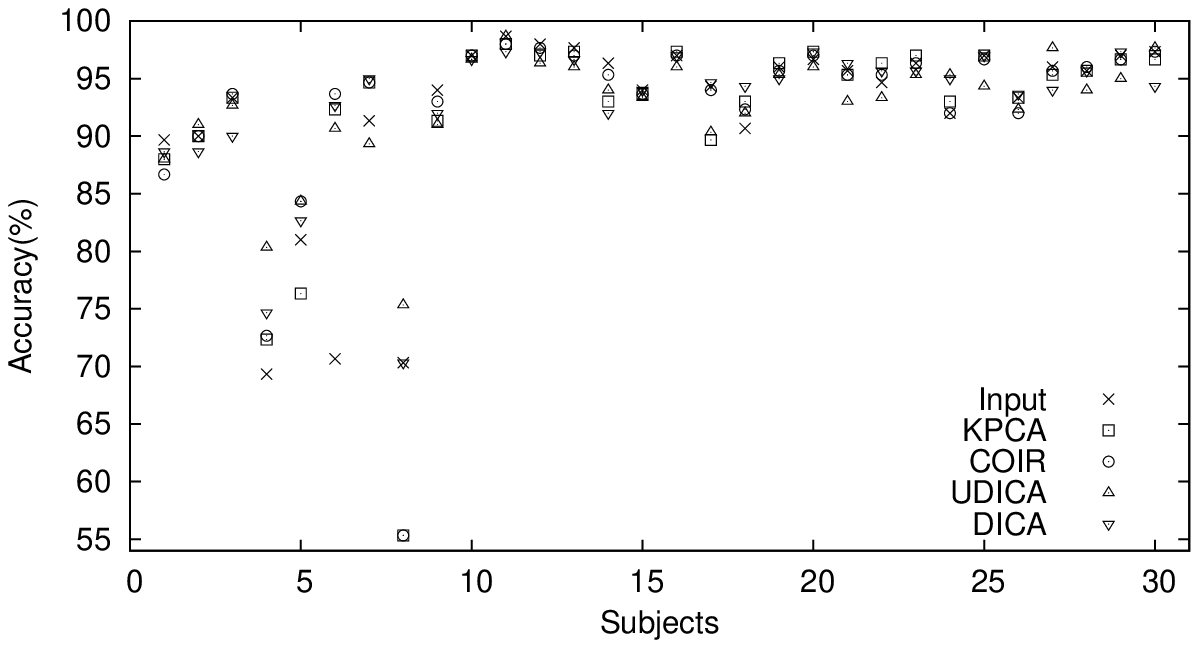}
  \includegraphics[width=\linewidth]{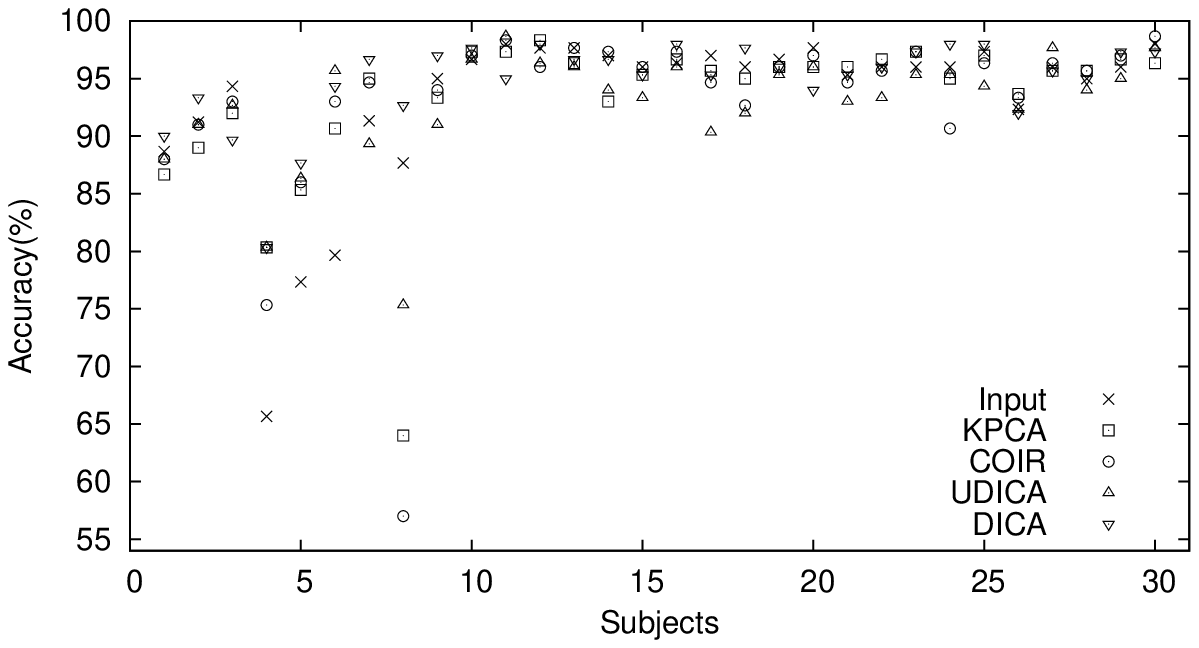}
  \caption{The leave-one-out accuracy of different methods evaluated on each subject in the GvHD dataset. The top figure depicts the pooling setting, whereas the bottom figure depicts the distributional setting.}
  \label{fig:flowloo}
\end{figure}

Figure \ref{fig:flowloo} depicts the leave-one-out accuracies of different approaches evaluated on each subject in the dataset. Average leave-one-out accuracies are reported in Table \ref{tab:flowloo}. The distributional SVM outperforms the pooling SVM in this setting, possibly because of the relatively large number of training subjects, i.e., 29 subjects. Using the invariant features learnt by DICA also gives higher accuracies than other approaches.

\end{document}